\theoremstyle{plain}
\newtheorem{theorem}{Theorem}[section]
\newtheorem{lemma}[theorem]{Lemma}
\theoremstyle{definition}
\newtheorem{definition}[theorem]{Definition}
\theoremstyle{remark}
\newtheorem{remark}[theorem]{Remark}
\def\I{{\mathbb{I}}}
\def\R{{\mathbb{R}}}
\def\A{{\mathcal{A}}}
\def\B{{\mathcal{B}}}
\def\F{{\mathcal{F}}}
\def\D{{\mathcal{D}}}
\def\S{{\mathcal{S}}}
\def\sign{{\hbox{\rm{sign}}}}
\def\Prob{{\hbox{\rm{Prob}}}}
\def\Acc{{\hbox{\rm{Acc}}}}
\def\CE{{{\rm{CE}}}}
\def\tr{{{\rm{tr}}}}
\def\ts{{{\rm{te}}}}
\def\poi{{{\rm{po}}}}
\def\val{{{\rm{va}}}}
\def\rb{{{\rm{rb}}}}
\def\sep{{\,\|\,}}
\def\text#1{\rm{#1}}
\title{Detection and Defense of Unlearnable Examples}
\author{%
  Yifan Zhu$^{1, 3}$, Lijia Yu$^{2, 3}$, Xiao-Shan Gao$^{1, 3}$
    \thanks{Corresponding author.}}
\begin{document}
\maketitle
\begin{abstract}
Privacy preserving has become increasingly critical with the emergence of social media. Unlearnable examples have been proposed to avoid leaking personal information on the Internet by degrading generalization abilities of deep learning models. However, our study reveals that unlearnable examples are easily detectable. We provide theoretical results on linear separability of certain unlearnable poisoned dataset and simple network based detection methods that can identify all existing unlearnable examples, as demonstrated by extensive experiments. 
Detectability of unlearnable examples with simple networks motivates us to design a novel defense method. We propose using stronger data augmentations coupled with adversarial noises generated by simple networks, to degrade the detectability and thus provide effective defense against unlearnable examples with a lower cost.
Adversarial training with large budgets is a widely-used defense method on unlearnable examples. We establish quantitative criteria between the poison and adversarial budgets which determine the existence of robust unlearnable examples or the failure of the adversarial defense. 
\end{abstract}

\section{Introduction}
Deep neural networks (DNNs)  have become the most powerful machine learning method, driving significant advances across numerous fields.
However, the security of deep learning remains a major concern.
One of the most serious security threat is {\em data poisoning}, where an attacker can manipulate the training data in a way that intentionally causes deep models to malfunction.
For example, by injecting triggers during the training phase, {\em backdoor attacks} \cite{chen2017targeted,gu2019badnets} will cause malfunctions or enable attackers to achieve specific objectives when triggers are activated.
Another type of data poisoning attack is {\em availability attacks} \cite{biggio2012poisoning,koh2017understanding,lu2022indiscriminate}, aiming to reduce the generalization capability of deep learning models by modifying features and labels of the training data.
Recently, {\em unlearnable data poisoning attacks} \cite{huang2020unlearnable,wang2021fooling} were proposed, which can modify features of all training data with a small poison budget to generate {\em unlearnable examples}.
Privacy preserving has become increasingly eye-catching in recent years, especially on face recognition \cite{shan2020fawkes, cherepanova2020lowkey, hu2022protecting}.
Unlearnable examples were designed initially for the ``let poisons be kind'' \cite{huang2020unlearnable,wang2021fooling} approach to privacy preservation,
allowing individuals to slightly modify their personal data to make them unlearnable without losing the semantics.

While unlearnable examples have been shown to be highly effective at deceiving victims into thinking that their deep learning models have been trained successfully by achieving high validation accuracy, this paper demonstrates that unlearnable examples can be detected with relative ease.
Specifically, we propose two effective methods to detect whether a given dataset has been poisoned by unlearnable data poisoning attacks.
It was experimental observed in \cite{yu2021indiscriminate} that unlearnable poisons  are linearly separable. 
In this paper, we further prove that for certain random or region class-wise poisons, the poisoned datasets rather than poisons  are linearly separable if the dimension of data is sufficiently large.
Based on these theoretical findings, we propose the {\em Simple Networks Detection} algorithm, which leverages linear models or simple two-layer networks to detect poisoned data. 
Additionally, We have experimentally observed that poisons are immune to large bias shifts. 
Based on this observation, we propose another detection algorithm called {\em Bias-shifting Noise Test}, which introduces a large bias to each training data, to destroy their original features while retaining the injected poison features. 
The resulting difference can be used to detect existences of poisons.
Our experiments on CIFAR-10, CIFAR-100, and TinyImageNet  demonstrate that all major unlearnable examples can be  effectively detected by both of the algorithms.

Furthermore, the detectability of unlearnable examples has inspired us to develop a defense strategy that focuses on degrading them. 
We achieve this through the use of data augmentations and adversarial noises that make these examples undetectable by simple networks.
By using stronger data augmentations and a two-layer neural network (NN) to generate stronger noises, we demonstrate that it becomes much more difficult to generate unlearnable examples.
Experiments have shown that our defense method is highly effective against unlearnable examples, which can even outperforms adversarial training regimes and state-of-the-art defense methods.

Adversarial training with  large defense budgets is a well-known defense method against unlearnable examples \cite{tao2021better}.
%
We establish theoretical criteria for the relationship between the poison budget and the adversarial defense budget.
Specifically, we prove that if the poison budget exceeds four times the size of the adversarial defense budget, or if the gap between the two budgets exceeds a certain constant, then robust unlearnable examples can be created to make adversarial training set linearly separable.
These theoretical results guarantee the existence of robust unlearnable examples, while also providing a lower bound on the adversarial defense budget required for adversarial training to be effective.
We also prove that to make dataset linearly separable, poison budget must be larger than a certain constant, under some mild assumptions.
Our experimental results confirm the validity of the theoretical results, showing that as the adversarial budget increases, unlearnable examples become learnable again, because the linear features injected by attackers are destroyed through adversarial training.

\vskip 5pt

We summarize our main contributions as follows:
\begin{itemize}
\item We propose two effective methods to detect whether a dataset has been poisoned by unlearnable data poisoning attacks, based on  theoretical results and experimental observations.

\item
We demonstrate that stronger data augmentations with adversarial noises generated by a simple network can destroy the detectability, as well as achieve good defense performance.

\item
We establish certified upper bounds of the poison budget relative to the adversarial defense budget required for generating robust unlearnable examples. 
\end{itemize}

\section{Related work}

\paragraph{Data Poisoning.} Data poisoning is a type of attack that can cause deep learning models to malfunction by modifying the training dataset.
Targeted data poisoning attacks 
\cite{shafahi2018poison,zhu2019transferable,huang2020metapoison,schwarzschild2021just} aimed to misclassify specific targets.
Previous availability attacks \cite{biggio2012poisoning,munoz2017towards} attempted to reduce test performance of models by poisoning a small portion of the training data. Unlearnable attacks \cite{huang2020unlearnable} were a special type of availability attack where the attacker poisons all of the training data using a small poison budget, resulting in significantly drop in test accuracy to almost random guessing \cite{feng2019learning, huang2020unlearnable,fowl2021adversarial,sandoval2022autoregressive}.
Robust unlearnable attacks \cite{fu2021robust} attempted to maintain the poisoning effects under adversarial training regimes. 
Another type of data poisoning was backdoor attacks \cite{chen2017targeted,gu2019badnets,barni2019new,turner2019label}, which induced triggers into trained models to cause malfunctions.

\paragraph{Attack Detection.} 
In recent years, several methods have been proposed to detect safety attacks of DNNs. Detection of adversarial attacks has been explored in \cite{metzen2016detecting,grosse2017statistical,abusnaina2021adversarial} for victims to determine whether a given data is an adversarial example or not.
Detection algorithms were provided to identify triggers and recover them under backdoor attacks \cite{chen2018detecting, chen2019deepinspect, dong2021black}.
\citet{guo2021aeva} detected backdoor attacked model using adversarial extreme value analysis.
Our detection methods are the first one to focus on unlearnable examples to the best of our knowledge.

\paragraph{Poison Defense.} Several defense methods for data poisoning have  emerged in recent years. In \citet{ma2019data}, a defense method using differential privacy was proposed.
In \citet{chen2021pois}, generative adversarial networks were used to detect and discriminate poisoned data. 
\citet{liu2022friendly} proposed using friendly noise to improve defense against data poisoning. 
\citet{wang2021robust} analyzed the robustness of stochastic gradient descent on data poisoning attacks.
 People also used adversarial training to defend unlearnable data poisoning attacks \cite{tao2021better}.
Recently, defense methods on unlearnable examples has been provided \cite{qin2023learning, liu2023image}.

\section{Notations and definitions of unlearnable examples}
\label{poison}

\paragraph{Notations.}
Denote the training dataset as $D_{\tr}=\{(\boldsymbol{x}_i,y_i)\}_{i=1}^{N}\subset\I^d\times[C]$,
where $d,N,C$ are positive integers, and $\I=\left[0,1\right]$ is the value range of data, $[C]=\{1,\ldots,C\}$ is the set of labels.
Let $D_{\tr}^{\poi}=\{(\boldsymbol{x}_i+\boldsymbol{\epsilon}(\boldsymbol{x}_i),y_i)\}_{i=1}^{N}$ be the {\em poisoned training dataset} of $D_{\tr}$,
where $\boldsymbol{\epsilon}(\boldsymbol{x}_i)$ is the poison elaborately generated by the poison attacker. In this paper, we assume that $\boldsymbol{\epsilon}(\boldsymbol{x}_i)$ is a small perturbation that satisfies $||\boldsymbol{\epsilon}(\boldsymbol{x}_i)||_{\infty}\leq\eta$,
where $\eta\in\R_{>0}$ is called the {\em poison budget}. 
%
A well-learned network $\F$ has good generalization performance on the test set $D_{\ts}$, i.e., have high test accuracy denoted as $\textup{Acc}(\F,D_{\ts})$.

\paragraph{Unlearnable examples.}
A poisoned dataset $D_{\tr}^{\poi}$ is called {\em unlearnable}~\cite{huang2020unlearnable},
if training a network $\F$ on $D_{\tr}^{\poi}$ results in
very low test accuracy $\textup{Acc}(\F,D_{\ts})$,
but achieves sufficiently high (poisoned) validation accuracy $\textup{Acc}(\F,D_{\textup{val}}^{\poi})$ on poisoned validation dataset. 
%
%
If the victim receives an unlearnable poisoned dataset, they may split it into a training set and a validation set. 
 Since the (poisoned) validation accuracy is good enough, the victim may assume that their model is performing well.
However, because the victim has no access to the (clean) test set $D_{\ts}$, their model will actually perform poorly on $D_{\ts}$. As a result, the victim will be deceived by the poisoned generator.

There are two basic types of unlearnable poisoning attacks:
{\em sample-wise} and {\em class-wise},
where sample-wise means that each sample is independently poisoned with a specific perturbation and class-wise means that samples with the same label are poisoned with the same perturbation.
 Some examples of poisoned data and their corresponding perturbations can be found in Appendix \ref{poison-img}.
 
The main unlearnable attack methods include:
Random(C), Region-$n$~\cite{sandoval2022poisons}, Err-min~\cite{huang2020unlearnable}, Err-max~\cite{fu2021robust}, NTGA~\cite{yuan2021neural}, AR~\cite{sandoval2022autoregressive}, RobustEM~\cite{fu2021robust}, CP~\cite{he2022indiscriminate}, TUE~\cite{ren2022transferable}, etc.

\section{Detection of unlearnable examples}

\label{section-detection}
Although unlearnable examples can achieve their goal of deceiving victims by having good validation accuracy and poor test accuracy in normal training processes \cite{huang2020unlearnable},
 we will demonstrate in this section that unlearnable examples can actually be easily detected. We will also provide   effective detection algorithms for detecting unlearnable examples using simple networks.

\subsection{Theoretical analyses of unlearnable examples}
\label{section-detection-theo}

Firstly, we present theoretical results that certain unlearnable examples are linearly separable, which can be used to identify the presence of poisons.
In \cite{yu2021indiscriminate}, they empirically discover the linear separability of unlearnable poison noises,
tt is worth noting that our approach differs from previous work \cite{yu2021indiscriminate}, in that we not only empirically find but also prove that unlearnable poisoned dataset, rather than noises, are linearly separable.

\begin{theorem}
\label{th-ex1}
Let $D=\left\{(\boldsymbol{x}_i,y_i)\right\}_{i=1}^N\subset\I^d\times[C]$. For the class-wise poison $\left\{\boldsymbol{v}_i\right\}_{i=1}^C\subset \R^d$ satisfying that $\forall i\in\left[C\right]$ and $j\in\left[d\right]$, 
$(\boldsymbol{v}_i)_j$ is i.i.d. and obeys distribution $\Delta(\epsilon)$, where $\Delta(\epsilon)=2\epsilon\cdot \textup{Bernoulli}\left(\frac{1}{2}\right)-\epsilon$, 
that is, $(\boldsymbol{v}_i)_j$ equals $\pm\epsilon$  with $\frac{1}{2}$ probability respectively.
Then with probability at least
$1-NC\left(2e^{-\frac{d\epsilon^2}{18}}+e^{-\frac{d}{32}}\right)$,
the class-wise poisoned dataset $D^{\poi}=\left\{\boldsymbol{x}_i+\boldsymbol{v}_{y_i},y_i\right\}_{i=1}^N$
is linearly separable.
\end{theorem}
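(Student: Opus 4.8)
The plan is to exhibit one explicit linear classifier and show it separates $D^{\poi}$ with the stated probability. Since the poison is class-wise, every point labelled $c$ is shifted by the same vector $\boldsymbol{v}_c$, so I would simply take the weight vector of class $c$ to be the poison itself, $\boldsymbol{w}_c=\boldsymbol{v}_c$, with zero bias. The intuition is that $\boldsymbol{v}_c^\top\boldsymbol{v}_c=d\epsilon^2$ is a large, deterministic quantity (each coordinate contributes exactly $\epsilon^2$), whereas every cross term is a centered sum that concentrates near $0$; for large $d$ the diagonal term dominates and creates a positive margin. Concretely, the fixed classifier $\{\boldsymbol{v}_c\}_{c=1}^C$ separates $D^{\poi}$ as soon as, for every sample $i$ and every wrong label $c'\neq y_i$,
\[
\boldsymbol{v}_{y_i}^\top(\boldsymbol{x}_i+\boldsymbol{v}_{y_i})>\boldsymbol{v}_{c'}^\top(\boldsymbol{x}_i+\boldsymbol{v}_{y_i}).
\]

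Next I would rearrange this margin condition. Writing $c=y_i$, the margin equals
\[
M_{i,c'}=\boldsymbol{v}_{c}^\top\boldsymbol{x}_i-\boldsymbol{v}_{c'}^\top\boldsymbol{x}_i+d\epsilon^2-\boldsymbol{v}_{c'}^\top\boldsymbol{v}_{c},
\]
so failure requires the three ``noise'' terms to collectively erode the budget $d\epsilon^2$. I would control them separately by Hoeffding's inequality, noting that $\boldsymbol{x}_i$ is fixed and the only randomness is in the $\boldsymbol{v}$'s. For $\boldsymbol{v}_c^\top\boldsymbol{x}_i=\sum_j(\boldsymbol{v}_c)_j(x_i)_j$ each summand lies in $[-\epsilon(x_i)_j,\epsilon(x_i)_j]$ with mean $0$, and since $\|\boldsymbol{x}_i\|^2\le d$ (because $(x_i)_j\in[0,1]$ gives $(x_i)_j^2\le(x_i)_j$), Hoeffding yields $\Pr[\boldsymbol{v}_c^\top\boldsymbol{x}_i\le-d\epsilon^2/3]\le e^{-d\epsilon^2/18}$, and symmetrically $\Pr[\boldsymbol{v}_{c'}^\top\boldsymbol{x}_i\ge d\epsilon^2/3]\le e^{-d\epsilon^2/18}$. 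For $\boldsymbol{v}_{c'}^\top\boldsymbol{v}_c=\sum_j(\boldsymbol{v}_{c'})_j(\boldsymbol{v}_c)_j$ each summand is $\pm\epsilon^2$ with mean $0$, so Hoeffding gives $\Pr[\boldsymbol{v}_{c'}^\top\boldsymbol{v}_c\ge d\epsilon^2/4]\le e^{-d/32}$ (here the $\epsilon$ cancels, leaving a dimension-only exponent). On the complement of these three events, $M_{i,c'}\ge d\epsilon^2\left(1-\tfrac13-\tfrac13-\tfrac14\right)=d\epsilon^2/12>0$.

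Finally I would union-bound. For a fixed pair $(i,c')$ the failure probability is at most $2e^{-d\epsilon^2/18}+e^{-d/32}$, and there are at most $N(C-1)<NC$ such pairs, so the probability that the fixed classifier $\{\boldsymbol{v}_c\}$ mislabels some poisoned point is at most $NC\left(2e^{-d\epsilon^2/18}+e^{-d/32}\right)$; its complement is exactly the claimed bound. I expect the only genuine design choices---hence the main ``obstacle''---to be (i) recognising that the poison vectors themselves are the right separating directions, and (ii) splitting the budget $d\epsilon^2$ as $\tfrac13+\tfrac13+\tfrac14$ so that the concentration exponents come out to precisely $\tfrac{d\epsilon^2}{18}$ and $\tfrac{d}{32}$ while still leaving strictly positive slack. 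A minor point to state carefully is that union-bounding over pairs reuses the same $\boldsymbol{v}_c$'s, which is fine since separability only asks for one classifier that works simultaneously. The bound is vacuous unless $d\epsilon^2$ and $d$ are large, which is the quantitative content of requiring the data dimension to be sufficiently large.
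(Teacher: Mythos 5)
Your proposal is correct and follows essentially the same route as the paper: the same classifier $W=[\boldsymbol{v}_1,\dots,\boldsymbol{v}_C]^T$ built from the poison vectors, the same thresholds $\frac{d\epsilon^2}{3}$ and $\frac{d\epsilon^2}{4}$ for the data--poison and poison--poison inner products, and the same union bound yielding $1-NC\left(2e^{-\frac{d\epsilon^2}{18}}+e^{-\frac{d}{32}}\right)$. The only cosmetic differences are that you invoke Hoeffding where the paper uses McDiarmid (equivalent here, since all quantities are sums of independent bounded variables) and that you union-bound over pairs $(i,c')$ with one-sided tails rather than over samples with two-sided tails, which lands on the identical final bound.
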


\begin{theorem}
\label{th-ex1-bc}
For the Region-$k$ poison $\left\{\boldsymbol{v}_i\right\}_{i=1}^C\subset \R^d$ satisfies that $\forall i\in\left[C\right]$, $(\boldsymbol{v}_i)_j$ is equal whenever $j$ is in the same region;
$(\boldsymbol{v}_i)_j$ is i.i.d. and obeys distribution $\Delta(\epsilon)$, whenever $j$ in different regions.
Then with probability at least
$1-NC\left(2e^{-\frac{k\epsilon^2}{18}}+e^{-\frac{k}{32}}\right)$, the Region-$k$ poisoned dataset $D^{\poi}=\left\{\boldsymbol{x}_i+\boldsymbol{v}_{y_i},y_i\right\}_{i=1}^N$ is linearly separable.
\end{theorem}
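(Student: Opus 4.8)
The plan is to exhibit an explicit linear classifier that separates $D^{\poi}$, exactly as in the proof of Theorem~\ref{th-ex1}, while carefully tracking how the region structure collapses the number of independent random summands from $d$ down to $k$. I take the poisons themselves as the weight vectors, assigning class $c$ the weight $\boldsymbol{v}_c$ with zero bias, and check that the correct class always wins: for every poisoned point $\boldsymbol{x}_i+\boldsymbol{v}_{y_i}$ and every wrong label $c\neq y_i$ one needs
$$\langle \boldsymbol{v}_{y_i},\boldsymbol{x}_i+\boldsymbol{v}_{y_i}\rangle-\langle \boldsymbol{v}_c,\boldsymbol{x}_i+\boldsymbol{v}_{y_i}\rangle=\|\boldsymbol{v}_{y_i}\|^2+\langle \boldsymbol{v}_{y_i}-\boldsymbol{v}_c,\boldsymbol{x}_i\rangle-\langle \boldsymbol{v}_c,\boldsymbol{v}_{y_i}\rangle>0.$$
Since each coordinate of a poison is $\pm\epsilon$, the signal term $\|\boldsymbol{v}_{y_i}\|^2=d\epsilon^2$ is deterministic and scales with the full dimension; it then remains only to show the two cross terms are small with high probability.

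Here the Region-$k$ structure enters. Writing the $k$ equal regions as $R_1,\dots,R_k$ with $|R_m|=d/k$, each poison is constant on a region, so $\langle \boldsymbol{v}_{y_i}-\boldsymbol{v}_c,\boldsymbol{x}_i\rangle=\sum_{m=1}^k\big((\boldsymbol{v}_{y_i})_{R_m}-(\boldsymbol{v}_c)_{R_m}\big)\sum_{j\in R_m}(\boldsymbol{x}_i)_j$ is a sum of only $k$ independent terms, each lying (for fixed $\boldsymbol{x}_i\in\I^d$) in an interval of width at most $4\epsilon\cdot(d/k)$, and likewise $\langle \boldsymbol{v}_c,\boldsymbol{v}_{y_i}\rangle$ is a sum of $k$ independent terms of width $2\epsilon^2\cdot(d/k)$. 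Applying Hoeffding's inequality over the poison randomness, with deviation threshold $\tfrac23 d\epsilon^2$ (two-sided) for the data--poison term and $\tfrac14 d\epsilon^2$ (one-sided) for the poison--poison term, the sums of squared widths are $16\epsilon^2d^2/k$ and $4\epsilon^4d^2/k$, which produce precisely the per-pair bounds $2e^{-k\epsilon^2/18}$ and $e^{-k/32}$.

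On the complement of these events one has $\langle \boldsymbol{v}_{y_i}-\boldsymbol{v}_c,\boldsymbol{x}_i\rangle\ge-\tfrac23 d\epsilon^2$ and $\langle \boldsymbol{v}_c,\boldsymbol{v}_{y_i}\rangle\le\tfrac14 d\epsilon^2$, so the score gap is at least $d\epsilon^2-\tfrac23 d\epsilon^2-\tfrac14 d\epsilon^2=\tfrac1{12}d\epsilon^2>0$ and the point is classified correctly. A union bound over the at most $NC$ pairs $(i,c)$ then gives the stated success probability $1-NC\big(2e^{-k\epsilon^2/18}+e^{-k/32}\big)$.

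The substance of the argument, and the only real departure from Theorem~\ref{th-ex1}, is that collapsing each region to a single degree of freedom shrinks the effective sample size in every concentration bound from $d$ to $k$, so the noise enjoys only $\sqrt{k}$ cancellation rather than $\sqrt d$. The delicate balance to verify is that although each per-region summand is inflated by the region size $d/k$, the signal $d\epsilon^2$ is inflated by the same total dimension, so the Hoeffding exponents come out cleanly in terms of $k$ and the signal-to-noise ratio stays bounded below by a constant. This balance relies on the regions being (essentially) equal-sized, since $\sum_m|R_m|^2$ attains its minimum $d^2/k$ exactly when $|R_m|=d/k$; for unbalanced regions the same proof goes through with $k$ replaced by the effective count $d^2/\sum_m|R_m|^2$.
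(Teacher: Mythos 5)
Your proof is correct and follows essentially the same route as the paper's: the paper also takes the poison matrix $[\boldsymbol{v}_1,\cdots,\boldsymbol{v}_C]^T$ as the linear classifier and applies McDiarmid's inequality to the $k$ independent region values (which, for sums of independent bounded summands, reduces exactly to your Hoeffding computation), obtaining the same per-pair bounds $2e^{-k\epsilon^2/18}$ and $e^{-k/32}$ before the union bound. Your closing observation that the stated exponent requires (essentially) equal-sized regions, with $k$ replaced by $d^2/\sum_m |R_m|^2$ in general, is a valid refinement that the paper leaves implicit.
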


The proofs of Theorems \ref{th-ex1} and \ref{th-ex1-bc} are deferred to Appendices \ref{proofs-1} and \ref{proofs-2} respectively.

\begin{remark}
By Theorem \ref{th-ex1}, when $d$ or $\epsilon$ are sufficiently large, certain Random(C) poisoned dataset is linearly separable.
Similarly, by Theorem \ref{th-ex1-bc},when $k$ is sufficiently large, certain Region-$k$ poisoned dataset is linearly separable. 
\end{remark}

\paragraph{Magnitude of poison budget to achieving linear separability.}
\label{re-epsilon-sep}
We add the Random(C) poison $\boldsymbol{v}_i\in\{+\epsilon,-\epsilon\}^d$ to CIFAR-10 with $\epsilon=8/255$, and use the linear network $\F(\boldsymbol{x})=[\boldsymbol{v}_1,\cdots,\boldsymbol{v}_C]^T\boldsymbol{x}$ for classification. Experimental results show that $49,963$ of $50,000$ poisoned training samples can be correctly classified by $\F$.  This finding indicates that $\epsilon=8/255$ is large enough to achieve linear separability for CIFAR-10 dataset.
Table \ref{tab-diff-poison} also shows that Region-16 and Err-min(S) poisons with budget $\epsilon=8/255$ is enough to make poisoned dataset linearly separable.

\paragraph{Sample-wise poisons.}
Theorems \ref{th-ex1} and \ref{th-ex1-bc} describe properties of class-wise poisons,but sample-wise poisons have similar properties as well.
In Appendix \ref{sec-7.3}, we provide experiments to demonstrate  the similarity between sample-wise and class-wise poisons by measuring the cosine similarity and commutative KL divergence.
Additionally, from Appendix \ref{sec-obs}, we can observe that both sample-wise and class-wise error-minimum poisoned dataset have similar training curves.

\subsection{Detection of unlearnable examples by simple networks}
\label{section-detection-simp}
Theorems \ref{th-ex1} and \ref{th-ex1-bc} imply that a poisoned dataset can be learned by a linear network. 
However, the clean dataset such as CIFAR-10, CIFAR-100, and TinyImageNet cannot be easily fitted by a linear model. 
We evaluate the linear separability rate for these datasets as shown below.

\begin{definition}[Linear Separability Rate]
Let dataset $S\subset\I^d\times[C]$ and $\F_{\textup{linear}}$ denotes the set of all linear models $f:\mathbb{R}^d\to \mathbb{R}^C$.
The {\em linear separability rate} of  $S$ is defined as
$\beta_S=\sup\limits_{f\in\F_{\textup{linear}}}\textup{Acc}(f,S).$
\end{definition}

\begin{remark}
\label{rem-58}

The linear separability rates of CIFAR-10, CIFAR-100, and TinyImageNet are 
$46.53\%$, $31.71\%$, $49.38\%$, respectively, 
by training them with a linear network.
\end{remark}

This difference between a clean dataset and a poisoned one can be  exploited to detect the presence of unlearnable examples, which motivates Algorithm \ref{alg-weak}, named  {\em Simple Networks Detection}.

\begin{algorithm}[!htt]
\caption{Simple Networks Detection}
\label{alg-weak}
\begin{algorithmic}
\renewcommand{\algorithmicrequire}{ \textbf{Input:}}
\REQUIRE
A dataset $D$ might be poisoned. 
A linear network or a two-layer NN (hidden width equals the data dimension) $\F$. 
A detection bound $B$ (say 0.7).
\renewcommand{\algorithmicrequire}{ \textbf{Output:}}
\REQUIRE Poison function $I(D)$.\\
$I(D)=1$ if $D$ is recognized as the poisoned dataset;\\
$I(D)=0$ if $D$ is recognized as the clean dataset.\\
%
\renewcommand{\algorithmicrequire}{ \textbf{Do:}}
\REQUIRE
~\\
Initialize parameters of the network $\F$. \\
Train the network $\F$ on dataset $D$ with loss function $L_{\CE}(\F(\boldsymbol{x}),y).$\\
%
\textbf{if} $\Acc(\F,D)\leq B$: $I(D)=0$; \textbf{else} $I(D)=1$.
\end{algorithmic}
\end{algorithm}

\paragraph{Detection under data augmentations.}
It is worth noting that, in practice, people often train the networks with some data augmentation methods. 
For instance, for CIFAR-10, random crop and random horizontal flip are commonly used data augmentation methods.
However, experimental observations have shown that the linear separability of poisoned datasets may easily be broken by data augmentations. 
Therefore, in cases where linear separability does not hold, utilizing a two-layer network emerges as the next most suitable criterion.
Furthermore, we can not further relax the simple networks to three-layer NN. This is because the training accuracy on clean data becomes excessively high, making it challenging to distinguish clean datasets from poisoned ones.

\subsection{Detection of unlearnable examples by bias-shifting noise}
\label{section-detection-bs}
In this section, we find out that unlearnable examples are almost immune to large bias-shifting noises. 
Inspired by this unusual behavior of unlearnable examples, we can detect poisons by training it with bias-shifting noises.

\paragraph{Resistance of poisoned dataset to bias-shifting noise.}

Table \ref{tab-noise-learn} in Appendix \ref{sec-7.2} shows that injected unlearnable poisons are strong features dominating the poisoned data, and the poisoned dataset are highly robust to bias-shifting noise.
In unlearnable examples setting, the injected poisons are very small, restricting to less than $8/255$ under the $l_{\infty}$ norm.
However, even when subjected to bias-shifting noise of dozens of times larger   like $\pm 0.5$, the victim model can still achieve $100\%$ (validation) accuracy. 

\begin{table}[!htbp]
\caption{Validation accuracy (\%) on different dataset:
$D_{\textup{clean}}=\{(\boldsymbol{x}_i,y_i)\}$ is the clean validation set,
$D_{\textup{poi}}=\{(\boldsymbol{x}_i+\boldsymbol{\epsilon}(\boldsymbol{x}_i),y_i)\}$,
$D_{\textup{poi-shift}}^b=\{(\boldsymbol{x}_i+\boldsymbol{\epsilon}(\boldsymbol{x}_i)+b\boldsymbol{e},y_i)\}$, where $b$ is the bias-shifting noise level and $\boldsymbol{e}$ is the all-ones vector.
}

\label{tab-learning-noise11}
\centering
\setlength{\tabcolsep}{2.3pt}
\begin{tabular}{lccccccc}
\toprule
Poison & Random(C) & Region-16  & Err-min(S) & NTGA & AR\\
\midrule
$D_{\textup{clean}}$ & 10.46 & 15.27  &10.02  &10.10 &13.63 \\
$D_{\textup{poi}}$ & 100.0 & 100.0 & 99.99   & 99.98 & 99.98\\
$D_{\textup{poi-shift}}^{0.5}$ & 99.84 & 99.64  & 100.0  & 95.26 & 99.82\\
$D_{\textup{poi-shift}}^{-0.5}$ & 100.0 & 99.98 & 97.85   & 90.82 & 90.96\\
\bottomrule
\end{tabular}
\end{table}

Table \ref{tab-learning-noise11} shows the accuracy on poisoned (validation) set and dataset with large bias-shifting noise.
When adding large bias-shifting noises to the poisoned dataset, the validation accuracy of it does not degrade significantly.
But when training on the clean dataset, it will drop significantly because the original features are destroyed by large bias-shifting noise.
This observation motivates us to introduce Algorithm \ref{alg-rule}, which is called {\em Bias-shifting Noise Test}.

\begin{algorithm}[!htbp]
\caption{Bias-shifting Noise Test}
\label{alg-rule}
\begin{algorithmic}
\renewcommand{\algorithmicrequire}{ \textbf{Input:}}
\REQUIRE 
A dataset $D$ might be poisoned. A DNN $\F$ (say  ResNet18). A detection bound $B$ (say 0.7). A bias-shifting noise $\boldsymbol{\epsilon}_{b}$.
\renewcommand{\algorithmicrequire}{ \textbf{Require:}}
\renewcommand{\algorithmicrequire}{ \textbf{Output:}}
\REQUIRE poison function $I(D)$.\\
\renewcommand{\algorithmicrequire}{ \textbf{Do:}}
\REQUIRE
~\\
Randomly split $D$ into training and validation set $D_{\tr}$ and $D_{\val}$.\\
Let the bias-shifting training set be $D_{\tr}^{\rb}=\{(\boldsymbol{x}+\boldsymbol{\epsilon}_{b},y)\sep (\boldsymbol{x},y)\in D_{\tr}\}$.\\
Initialize parameters of the network $\F$. \\
Train the network $\F$ on the bias-shifting training set $D_{\tr}^{\rb}$ with loss function $L_{\CE}(\F(\boldsymbol{x}),y).$\\
\textbf{if} $\Acc(\F,D_{\val})\leq B$: $I(D)=0$; \textbf{else} $I(D)=1$.
\end{algorithmic}
\end{algorithm}

\paragraph{Choice of bias-shifting noise.}
If data are lying in the range $[a,b]$, it is recommended to choose $\frac{b-a}{2}\boldsymbol{e}$ or $-\frac{b-a}{2}\boldsymbol{e}$ as the bias-shifting noise $\boldsymbol{\epsilon}_b$.
As images always lie in $[0,1]$, for simplicity we choose $\boldsymbol{\epsilon}_b = \pm 0.5\boldsymbol{e}$ in this paper, more results on different choices of $\boldsymbol{\epsilon}_b$ are provided in Appendix \ref{app-bias-shift}.
Such noise can effectively destroy the original features, without affecting the injected noise features, as shown in
Table \ref{tab-learning-noise11} for dataset  $D_{\text{poi-shift}}^{\pm 0.5}$.

\section{Defense of unlearnable examples by breaking detectability}
\label{strong-aug}
In Section \ref{section-detection}, we have proven that certain unlearnable examples can be separated by linear networks, and shown that all of the existing unlearnable examples can be easily fitted by simple networks like two-layer NNs even under the usual data augmentations regime.
Therefore, we believe properties that can be fitted by simple networks are the principle of why unlearnable examples work, which can also be used for detection.

 Based on this, once the dataset is detected to be unlearnable, one potential solution is to defend it by destroying its detectability. 
 Adversarial training is a well-known approach for defending against unlearnable examples \cite{tao2021better}, but it is expensive. We can achieve similar goals at a lower cost by breaking detectability of unlearnable examples.

\paragraph{Adding hard-to-learn adversarial noises of simple networks degrades detectability.}
To destroy the detectability of simple networks, we may add adversarial noise $\boldsymbol{\epsilon}$ to each $\boldsymbol{x}_i$, which is hard-to-learn for a simple two-layer NN $\F_{\textup{simple}}$.
Adversarial noises $\boldsymbol{\epsilon}(\boldsymbol{x}_i)$ are generated by PGD attack \cite{madry2018towards} on the robustly learned network $\F_{\textup{simple}}^{\textup{robust}}$, where  $\F_{\textup{simple}}^{\textup{robust}}$ is obtained by adversarial training: 
$$\arg\min\limits_{\F_{\textup{simple}}^{\textup{robust}}}\sum\limits_{(\boldsymbol{x}_i,y_i)\in D} \max\limits_{||\boldsymbol{\delta}||\le\eta} Loss(\F_{\textup{simple}}^{\textup{robust}}(\boldsymbol{x}_i+\boldsymbol{\delta}(\boldsymbol{x}_i)),y_i).$$
%
The generateed adversarial noise will make it difficult for poisoned dataset to be fitted by simple networks, which can destroy the detectability of unlearnable examples, while the small budget $\eta$ will not affect the original features.

\paragraph{Stronger data augmentations destroy detectability.}

\begin{table}[!htbp]
\caption{The detection (training) accuracy (\%) of Algorithm \ref{alg-weak} under two-layer NN for poisoned CIFAR-10 and CIFAR-100 with stronger data augmentation used in contrastive learning.}
\label{strong-aug-detect}
\centering
\begin{tabular}{lcccc}
  \toprule
  Poisons & CIFAR-10 & CIFAR-100\\

\midrule
  Clean data & \textit{27.41}  & \textit{10.67} \\
  Region-16 & \textit{35.20} & \textit{16.57}  \\
  Err-min(S) & \textit{27.28}  & \textit{11.25}  \\
  RobustEM & \textit{27.27}  & \textit{11.51} \\
\bottomrule
\end{tabular}
\end{table}

As discussed in Section \ref{section-detection}, we have proven that certain class-wise unlearnable examples are linearly separable, but linear model detection under standard data augmentations will   fail. This inspires us to use a relaxed version of the model, such as a two-layer NN, for effective detection. 
Therefore, when detecting unlearnable examples by simple networks, data augmentation methods may degrade the detectability.

Inspired by the role of data augmentations in detection, we  introduce stronger data augmentation used in contrastive learning \cite{he2020momentum, chen2020simple} which contains random resized crop, random horizontal flip, color jitter and random grayscale.
We conduct experiments on Table \ref{strong-aug-detect} to demonstrate the detection performance under stronger data augmentations. Results show that two-layer NN is hardly to detect whether a dataset is poisoned under stronger data augmentations.
Therefore, we conclude that stronger data augmentations can make it easier to break detectability of unlearnable examples.

Based on the above discussions, we   propose a fast algorithm \ref{alg-strong-aug} which can break detectability of poisoned dataset to defend against unlearnable examples.
Experimental results on Algorithm \ref{alg-strong-aug} will be provided in Section \ref{exp-defense}.

\begin{algorithm}[!htt]
\caption{Stronger Data Augmentations with Adversarial Noises (SDA+AN)}
\label{alg-strong-aug}
\begin{algorithmic}
\renewcommand{\algorithmicrequire}{ \textbf{Input:}}
\REQUIRE
Unlearnably poisoned dataset $D=\left\{(\boldsymbol{x}_i, y_i)\right\}_{i=1}^N$. Two-layer NN $\F_{\textup{simple}}$. Data augmentation method $A$. A DNN $\F$ (Say ResNet18).
\renewcommand{\algorithmicrequire}{ \textbf{Output:}}
\REQUIRE 
Trained network $\F$.

\renewcommand{\algorithmicrequire}{ \textbf{Do:}}
\REQUIRE 
~\\
\quad Initialize parameters of the networks $\F_{\textup{simple}}$ and $\F$. \\
\quad Adversarially train the network $\F_{\textup{simple}}$ on dataset $D$.\\
\quad Generate adversarial noise on adversarially-trained network $\F_{\textup{simple}}^*$ with $\boldsymbol{\epsilon}(\boldsymbol{x}_i)=\arg\max\limits_{\boldsymbol{||\epsilon||}_p\le\eta} L_{\CE}(\F_{\textup{simple}}^*(\boldsymbol{x}_i+\boldsymbol{\epsilon}),y_i).$\\
\quad Train classification network $\F$ with data augmentation $A$: 
$\min\limits_{\F}\sum\limits_{i=1}^N L_{\CE}(A((\boldsymbol{x}_i+\boldsymbol{\epsilon}(\boldsymbol{x}_i)),y_i))$.

\end{algorithmic}
\end{algorithm}

\section{Criteria of the poison and defense budgets under adversarial training}
\label{def-ats}
Adversarial training is a widely-used defense method against unlearnable examples \cite{tao2021better}. 
However, there is a trade-off between accuracy and robustness in adversarial training, choosing a huge budget to resist unlearnable examples will affect accuracy.
Nevertheless, \cite{wang2021fooling,fu2021robust} show experimentally that adversarial training  with small budget may fail to defend some unlearnable attacks, called {\em robust unlearnable examples}.

On the theoretical aspect, \cite{tao2021better}  proved that unlearnable attacks will fail when the adversarial budget is greater than or equal to the poison budget.
In this section, we will prove three theoretical results on criteria between the poison budget and the adversarial defense budget, and in particular give a certified upper bound on the poison budget for the existence of robust unlearnable examples.
First, we give a definition of the adversarial training set, which is the largest set that adversarial training can use for training.

\begin{definition}[Adversarial Training Set]
\label{ats}
Let $S$ be a (clean) training set. 
$\B_p(S,\epsilon)$ is called the {\em adversarial training set} of $S$  with a small budget $\epsilon$, which is defined as follow:
$$\B_p(S,\epsilon) =\left\{(\boldsymbol{x}+\boldsymbol{\delta},y)\sep (\boldsymbol{x},y)\in\S, \|\boldsymbol{\delta}\|_p\leq\epsilon \right\}.$$
\end{definition}

In \cite{kalimeris2019sgd}, it was noted that during the initial epochs of training, the model learns a function that is highly correlated with the linear features of the data. We also conduct a simple experiment presented at Remark \ref{rema-linear} in Appendix \ref{def-th-app} to show that the linear separability of dataset will result in the victim model to perform like the linear model.
The following two theorems inform us that when the poison budget exceeds a certain threshold compared to the adversarial budget, the adversarial training set will become linearly separable, in other words, adversarial training will malfunction.

\begin{theorem}
\label{th-def1}
Let $S=\left\{(\boldsymbol{x}_i,y_i)\right\}_{i=1}^N\subset\I^d\times[C]$ be a (clean) training set and $\epsilon\in\R_{>0}$ satisfies $e^{-\frac{d\epsilon^2}{8}}+e^{-\frac{d}{50}}\leq\frac{1}{2NC}$.
Then there exists a class-wise poisoned training set $S^{\poi}$ with poison budget at most $4\epsilon$, such that adversarial training set $\B_{\infty}(S^{\poi},\epsilon)$  is linearly separable.
\end{theorem}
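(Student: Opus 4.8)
The plan is to argue by the probabilistic method, mirroring the proof of Theorem~\ref{th-ex1} but carrying an extra adversarial term. I would take the class-wise poison to be \emph{random}: let each $\boldsymbol{v}_i\in\R^d$ have i.i.d.\ coordinates drawn from $\Delta(4\epsilon)$, so that $\|\boldsymbol{v}_i\|_\infty=4\epsilon$ meets the budget $4\epsilon$, set $S^{\poi}=\{(\boldsymbol{x}_i+\boldsymbol{v}_{y_i},y_i)\}$, and propose the linear classifier $\F(\boldsymbol{x})=[\boldsymbol{v}_1,\cdots,\boldsymbol{v}_C]^T\boldsymbol{x}$ whose $k$-th weight vector is the poison $\boldsymbol{v}_k$ itself. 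A generic point of $\B_\infty(S^{\poi},\epsilon)$ is $\boldsymbol{x}_i+\boldsymbol{v}_{y_i}+\boldsymbol{\delta}$ with $\|\boldsymbol{\delta}\|_\infty\le\epsilon$, and it is classified correctly iff $\langle\boldsymbol{v}_{y_i}-\boldsymbol{v}_k,\,\boldsymbol{x}_i+\boldsymbol{v}_{y_i}+\boldsymbol{\delta}\rangle>0$ for every $k\neq y_i$. Minimizing the left-hand side over $\boldsymbol{\delta}$ via $\ell_\infty$/$\ell_1$ duality ($\min_{\|\boldsymbol{\delta}\|_\infty\le\epsilon}\langle\boldsymbol{a},\boldsymbol{\delta}\rangle=-\epsilon\|\boldsymbol{a}\|_1$) reduces robust linear separability to the single deterministic margin condition
$$M_{i,k}:=\langle\boldsymbol{v}_{y_i}-\boldsymbol{v}_k,\,\boldsymbol{x}_i+\boldsymbol{v}_{y_i}\rangle-\epsilon\|\boldsymbol{v}_{y_i}-\boldsymbol{v}_k\|_1>0$$
for all $i\in[N]$ and all $k\neq y_i$.

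Writing $c=4\epsilon$, I would verify by a routine coordinate-wise computation (each coordinate of $\boldsymbol{v}_{y_i}-\boldsymbol{v}_k$ is $0$ with probability $\tfrac12$ and $\pm 2c$ otherwise) that $\mathbb{E}\langle\boldsymbol{v}_{y_i}-\boldsymbol{v}_k,\boldsymbol{x}_i+\boldsymbol{v}_{y_i}\rangle=dc^2$ and $\mathbb{E}\|\boldsymbol{v}_{y_i}-\boldsymbol{v}_k\|_1=dc$; both are sums of $d$ independent, bounded coordinate contributions, of ranges $4c$ and $2c$ respectively. Applying Hoeffding's inequality twice with deviations $\tfrac14 dc^2$ and $\tfrac15 dc$ gives
$$\Prob\!\left(\langle\boldsymbol{v}_{y_i}-\boldsymbol{v}_k,\boldsymbol{x}_i+\boldsymbol{v}_{y_i}\rangle\le\tfrac34 dc^2\right)\le e^{-d\epsilon^2/8},\qquad \Prob\!\left(\|\boldsymbol{v}_{y_i}-\boldsymbol{v}_k\|_1\ge\tfrac65 dc\right)\le e^{-d/50},$$
where the constants $c=4\epsilon$, $\tfrac14$, $\tfrac15$ are chosen precisely so the two exponents match the statement. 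Off the union of these two bad events, $M_{i,k}\ge\tfrac34 dc^2-\tfrac65 dc\,\epsilon=12 d\epsilon^2-\tfrac{24}{5}d\epsilon^2>0$.

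Finally I would union-bound over the at most $NC$ pairs $(i,k)$: the total failure probability is at most $NC\big(e^{-d\epsilon^2/8}+e^{-d/50}\big)\le\tfrac12$ by the hypothesis $e^{-d\epsilon^2/8}+e^{-d/50}\le\tfrac1{2NC}$. Hence with probability at least $\tfrac12$ a random poison makes every $M_{i,k}$ positive, so such a poison exists and $\B_\infty(S^{\poi},\epsilon)$ is linearly separable, witnessed by $\F$.

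The main obstacle is the interplay between the deterministic ``signal'' and the adversarial ``loss'' after the reduction. The signal $\tfrac34 dc^2=\Theta(d\epsilon^2)$ must strictly dominate the adversarially induced term $\epsilon\|\boldsymbol{v}_{y_i}-\boldsymbol{v}_k\|_1=\Theta(d\epsilon^2)$, and this is exactly where the factor-$4$ budget is forced: with coordinate magnitude $c\le\epsilon$ the signal $dc^2$ could not overcome the adversarial term $\epsilon\cdot dc$, so linear separability of the \emph{adversarial} training set would fail. The only delicate bookkeeping is tuning the Hoeffding deviations so that the two tail exponents come out exactly as $e^{-d\epsilon^2/8}$ and $e^{-d/50}$.
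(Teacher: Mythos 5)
Your proof is correct, and it shares the paper's skeleton — the same random class-wise poison with i.i.d.\ $\Delta(4\epsilon)$ coordinates, the same witness classifier $\F(\boldsymbol{x})=[\boldsymbol{v}_1,\cdots,\boldsymbol{v}_C]^T\boldsymbol{x}$, and the probabilistic method with a union bound over the $NC$ pairs — but the key technical step is genuinely different. The paper splits the poison in half, invokes Theorem~\ref{th-ex1} on the auxiliary set $\{(\boldsymbol{x}_i+\boldsymbol{v}_{y_i}/2,y_i)\}$ to secure a base margin, and then absorbs the adversarial perturbation via H\"older followed by Cauchy--Schwarz, $\|\boldsymbol{v}_{y}-\boldsymbol{v}_k\|_1\leq\sqrt{d}\,\|\boldsymbol{v}_{y}-\boldsymbol{v}_k\|_2$, with the $\ell_2$ norm controlled through McDiarmid concentration of $\left<\boldsymbol{v}_{y},\boldsymbol{v}_k\right>$; this route survives by a thin margin, roughly $\left(\frac{32}{5}-\sqrt{32+\frac{16}{5}}\right)d\epsilon^2\approx 0.47\,d\epsilon^2$. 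You instead collapse robust separability to the single deterministic condition $M_{i,k}>0$ by exact $\ell_\infty/\ell_1$ duality and apply Hoeffding \emph{directly} to both the margin and $\|\boldsymbol{v}_{y}-\boldsymbol{v}_k\|_1$, avoiding the lossy Cauchy--Schwarz step (your bound $\frac{6}{5}dc=4.8\,d\epsilon$ beats the paper's $\approx 5.93\,d\epsilon$), which leaves the comfortable margin $7.2\,d\epsilon^2$; your failure-probability bookkeeping also matches the stated hypothesis exactly, giving success probability at least $\frac{1}{2}$, whereas the paper mixes $e^{-d/32}$ and $e^{-d/50}$ tails and is sloppier with constants. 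What the paper's route buys is reuse of Theorem~\ref{th-ex1}; what yours buys is a self-contained, tighter, and cleaner argument. One small caveat: your ``range $4c$'' claim for the coordinate contributions $(\boldsymbol{v}_{y}-\boldsymbol{v}_k)_j\big((\boldsymbol{x}_i)_j+(\boldsymbol{v}_{y})_j\big)$, whose spread is at most $\max(4c^2,4c(\boldsymbol{x}_i)_j)$, requires $c=4\epsilon\leq 1$ — harmless here since the poisoned data must stay near $[0,1]$, and the paper makes the same implicit assumption (cf.\ the proof of Theorem~\ref{cor}, which uses $\epsilon\leq 1$), but it is worth stating.
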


\begin{theorem}
\label{cor}
Let $S=\left\{(\boldsymbol{x}_i,y_i)\right\}_{i=1}^N\subset\I^d\times[C]$ be a (clean) training set. 
If there exists a linearly separable poisoned set of $S$  with poison budget $\epsilon$,
then for any $\eta>0$, there exists a poisoned training set $S^{\poi}$ of $S$ with budget $\eta+\epsilon$, such that the adversarial training set $\B_{\infty}(S^{\poi},\eta)$ is linearly separable by $\F$.
Particularly, $\epsilon=\Omega(\sqrt{\frac{\log NC}{d}})$ satisfies the above condition.
\end{theorem}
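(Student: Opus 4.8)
The plan is to recast the conclusion ``$\B_{\infty}(S^{\poi},\eta)$ is linearly separable'' as an $\ell_1$-margin condition, and then to supply an explicit poison whose separating margin is large enough to absorb every $\eta$-bounded adversarial perturbation. First I would observe that a linear map $\F$ with rows $\boldsymbol{w}_1,\dots,\boldsymbol{w}_C$ classifies the whole set $\B_{\infty}(S^{\poi},\eta)$ correctly if and only if for every sample $i$ and every $c\neq y_i$
$$(\boldsymbol{w}_{y_i}-\boldsymbol{w}_c)^{T}(\boldsymbol{x}_i+\boldsymbol{p}(\boldsymbol{x}_i)) > \eta\,\|\boldsymbol{w}_{y_i}-\boldsymbol{w}_c\|_1,$$
where $\boldsymbol{p}(\boldsymbol{x}_i)$ is the poison on $\boldsymbol{x}_i$. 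The right-hand side is exactly the worst case of $-(\boldsymbol{w}_{y_i}-\boldsymbol{w}_c)^{T}\boldsymbol{\delta}$ over $\|\boldsymbol{\delta}\|_{\infty}\le\eta$, attained at $\boldsymbol{\delta}=-\eta\,\sign(\boldsymbol{w}_{y_i}-\boldsymbol{w}_c)$ by H\"older's inequality, so it suffices to exhibit one poison together with one classifier satisfying this inequality uniformly.

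For the concrete construction, which also delivers the ``particularly'' clause, I would reuse the class-wise random-sign poison of Theorem \ref{th-ex1} but amplify its amplitude from $\epsilon$ to $\eta+\epsilon$: draw signs $s_{cj}\in\{\pm1\}$ i.i.d.\ uniformly, set $(\boldsymbol{v}_c)_j=(\eta+\epsilon)s_{cj}$, poison by $\boldsymbol{x}_i\mapsto\boldsymbol{x}_i+\boldsymbol{v}_{y_i}$ so the budget is exactly $\eta+\epsilon$, and take $\F=[\boldsymbol{v}_1,\dots,\boldsymbol{v}_C]^{T}$, i.e.\ $\boldsymbol{w}_c=\boldsymbol{v}_c$. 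Expanding the margin gives $(\boldsymbol{v}_{y_i}-\boldsymbol{v}_c)^{T}(\boldsymbol{x}_i+\boldsymbol{v}_{y_i}) = \|\boldsymbol{v}_{y_i}\|_2^2 - \boldsymbol{v}_c^{T}\boldsymbol{v}_{y_i} + (\boldsymbol{v}_{y_i}-\boldsymbol{v}_c)^{T}\boldsymbol{x}_i$, whose leading term is $\|\boldsymbol{v}_{y_i}\|_2^2=d(\eta+\epsilon)^2$. The remaining pieces concentrate: $\boldsymbol{v}_c^{T}\boldsymbol{v}_{y_i}$ is a sum of $d$ independent mean-zero $\pm(\eta+\epsilon)^2$ terms, hence $O((\eta+\epsilon)^2\sqrt{d})$ with high probability; $(\boldsymbol{v}_{y_i}-\boldsymbol{v}_c)^{T}\boldsymbol{x}_i$ is mean-zero with fluctuation $O((\eta+\epsilon)\sqrt{d})$ since $\boldsymbol{x}_i\in\I^d$; and the threshold $\|\boldsymbol{v}_{y_i}-\boldsymbol{v}_c\|_1$ concentrates at $(\eta+\epsilon)d$ because about half the signs differ, so $\eta\|\boldsymbol{v}_{y_i}-\boldsymbol{v}_c\|_1\approx\eta(\eta+\epsilon)d$.

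Subtracting, the net margin is approximately $d(\eta+\epsilon)^2-\eta(\eta+\epsilon)d=d\,\epsilon(\eta+\epsilon)>0$. I would then apply a Hoeffding/Chernoff bound to each of the $O(NC)$ comparisons and a union bound, showing that this signal term $d\,\epsilon(\eta+\epsilon)$ beats all fluctuations simultaneously precisely when $\epsilon=\Omega(\sqrt{(\log NC)/d})$. This is exactly the regime in which Theorem \ref{th-ex1} already guarantees a budget-$\epsilon$ separable poison, so in the only regime where the hypothesis is non-vacuous it is automatically met, which proves the ``particularly'' statement as well.

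The hard part will be the \emph{uniform} control of the worst-case perturbation across all class pairs at once: I must guarantee that the amplified signal $d\,\epsilon(\eta+\epsilon)$ dominates both the cross-correlation fluctuations $O((\eta+\epsilon)^2\sqrt{d})$ and the fluctuations of the $\ell_1$ threshold $\eta\|\boldsymbol{v}_{y_i}-\boldsymbol{v}_c\|_1$ for every $i$ and every $c$ simultaneously, which is what forces the $\sqrt{\log NC}$ factor and pins down the stated threshold. A secondary subtlety is that one per-sample poison and one classifier must create margin against \emph{every} competing class at once; this is why the near-orthogonality of the random sign vectors is essential, and why I would carry out the explicit $\pm(\eta+\epsilon)$ construction rather than trying to robustify an arbitrary budget-$\epsilon$ separable poison, whose margin could be too small and whose single ``push direction'' need not help against all classes.
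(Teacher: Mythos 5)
Your construction is in fact identical to the paper's: the paper's Lemma \ref{th-append} poisons with $\boldsymbol{v}_{y_i}+\eta\cdot\sign(\boldsymbol{v}_{y_i})$ where $\boldsymbol{v}_{y_i}\in\{\pm\epsilon\}^d$, which is exactly your amplitude-$(\eta+\epsilon)$ random sign vector, and your classifier $[\boldsymbol{v}_1,\dots,\boldsymbol{v}_C]^T$ agrees with the paper's up to a positive scalar (irrelevant for argmax classification). Your reduction of robust separability to the $\ell_1$-margin condition via H\"older is also the same first step. The divergence is in how the margin is verified, and there your fluctuation bookkeeping has a genuine quantitative gap: you treat the cross-correlation $\boldsymbol{v}_c^{T}\boldsymbol{v}_{y_i}$ (deviation $O((\eta+\epsilon)^2\sqrt{d})$) and the threshold $\eta\|\boldsymbol{v}_{y_i}-\boldsymbol{v}_c\|_1$ (deviation $O(\eta(\eta+\epsilon)\sqrt{d}))$ as two independent fluctuations to be dominated by the signal $d\,\epsilon(\eta+\epsilon)$. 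Dominating $(\eta+\epsilon)^2\sqrt{d\log NC}$ requires $\epsilon\gtrsim(\eta+\epsilon)\sqrt{(\log NC)/d}$, an $\eta$-dependent condition; since the theorem asserts the conclusion \emph{for any} $\eta>0$ under an $\eta$-free hypothesis on $\epsilon$, your claim that the union bound closes ``precisely when $\epsilon=\Omega(\sqrt{(\log NC)/d})$'' does not follow from the accounting you describe once $\eta$ is large.

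The repair is precisely the paper's key observation: the two terms you bound separately are not independent. Writing $D=\#\{j:\sign(\boldsymbol{v}_{y_i})_j\neq\sign(\boldsymbol{v}_c)_j\}$, one has \emph{deterministically} $\boldsymbol{v}_c^{T}\boldsymbol{v}_{y_i}=(\eta+\epsilon)^2(d-2D)$ and $\|\boldsymbol{v}_{y_i}-\boldsymbol{v}_c\|_1=2(\eta+\epsilon)D$, so after the H\"older step the worst-case margin collapses to
\begin{equation*}
2(\eta+\epsilon)\,\epsilon\, D+(\boldsymbol{v}_{y_i}-\boldsymbol{v}_c)^{T}\boldsymbol{x}_i,
\end{equation*}
i.e.\ the $\eta$-dependent fluctuations cancel exactly, only the data term needs Hoeffding, and $D\geq d/4$ fails with probability $e^{-\Omega(d)}$; this yields the $\eta$-free threshold $\epsilon=\Omega(\sqrt{(\log NC)/d})$. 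The paper packages this cancellation as the identity $\left<\boldsymbol{a},\boldsymbol{b}\right>+\|\boldsymbol{a}-\boldsymbol{b}\|_1=d$ for $\boldsymbol{a},\boldsymbol{b}\in\{-1,+1\}^d$, which makes the robustification step in Lemma \ref{th-append} fully deterministic: given the budget-$\epsilon$ separable sign poison from Theorem \ref{th-ex1}, the added $\eta\cdot\sign(\boldsymbol{v}_{y_i})$ compensates every $\|\boldsymbol{\delta}\|_{\infty}\leq\eta$ exactly, for arbitrary $\eta$, with no new concentration argument at all. With this correlation noticed, your proof closes and essentially coincides with the paper's; without it, the stated threshold is not reached, so as written the proposal falls short at exactly the step you flagged as the hard part.
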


The proofs of Theorems \ref{th-def1} and \ref{cor} are deferred to Appendices \ref{proofs-3} and  \ref{proofs-5}.

\begin{table*}[!ht]
\caption{The detection accuracy (\%) for unlearnable examples under Algorithms \ref{alg-weak} and \ref{alg-rule},  the accuracy is for the training and validation set respectively.
\textbf{Linear} represents linear model and \textbf{2-NN} represents two-layer NN.
For Algorithm \ref{alg-rule}, two columns are for bias-shifting noise  $\boldsymbol{\epsilon}_b=\pm 0.5\boldsymbol{e}$.
If dataset is recognized as a clean set, their accuracy are marked in italic.
}
\label{detect-exp22}
\setlength{\tabcolsep}{4pt}
\centering
\begin{tabular}{lcccccccccccccc}
\toprule
\multirow{3}{*}{Poison} & \multicolumn{4}{c}{CIFAR-10}&\multicolumn{4}{c}{CIFAR-100} & \multicolumn{4}{c}{TinyImageNet} \\
& \multicolumn{2}{c}{Algorithm \ref{alg-weak}}  & \multicolumn{2}{c}{Algorithm \ref{alg-rule}}
  & \multicolumn{2}{c}{Algorithm \ref{alg-weak}}
  & \multicolumn{2}{c}{Algorithm \ref{alg-rule}}& \multicolumn{2}{c}{Algorithm \ref{alg-weak}}
  & \multicolumn{2}{c}{Algorithm \ref{alg-rule}}\\
 & \multicolumn{1}{c}{Linear} & \multicolumn{1}{c}{2-NN} & $-0.5$ & $0.5$& \multicolumn{1}{c}{Linear} & \multicolumn{1}{c}{2-NN} & $-0.5$ & $0.5$& \multicolumn{1}{c}{Linear} & \multicolumn{1}{c}{2-NN} & $-0.5$ & $0.5$\\
\midrule
  Clean data & \textit{46.53} &\textit{57.33} &\textit{49.08} & \textit{42.12} & \textit{31.71}&\textit{26.52} & \textit{42.15} & \textit{29.02} & \textit{49.38} & \textit{7.24} &\textit{30.63} & \textit{20.28}\\
  Random(C) &  100.0 &99.13 & 100.0 & 99.84&99.86& 84.19&  99.98 & 97.62&  100.0 &97.82 & 100.0 & 86.02\\
  Region-16 & 99.87& 99.98 & 99.98 & 99.64& 98.39& 88.72& 100.0& 98.76& 99.54&  97.82   & 99.72 & 89.02\\
  Err-min(S) & 99.99&99.77&97.85 & 100.0&99.37& 83.94& 100.0& 100.0& 99.93&97.21&99.99 & 99.27\\
  Err-min(C) &100.0&99.66&100.0 & 100.0&99.96& 84.78&100.0& 100.0& 100.0&  98.29&100.0 & 100.0\\
Err-max & 82.48 & 97.52 & 99.96 & 99.20 & \textit{37.41} & 83.34 & 97.56 & 76.02 & \textit{63.89} & 97.32 & 98.85 & 85.22\\
  RobustEM &78.49 & 99.41 &99.10 & 99.40& \textit{40.25}& 89.94 & 97.84 & 76.00&73.44 & 98.75&94.36 & \textit{68.59}\\
\bottomrule
\end{tabular}
\end{table*}

As the clean dataset has low accuracy on linear model shown in Remark \ref{rem-58}, by Theorems \ref{th-def1} and \ref{cor}, for dataset $S=\left\{(\boldsymbol{x}_i,y_i)\right\}_{i=1}^N\subset\I^d\times[C]$, if the poison budget is more than four times of the adversarial defense budget, or the poison budget is more than a certain number $\epsilon$ of the adversarial defense budget $\eta$, robust unlearnable attacks on $S$ are available.
In other words, adversarial training may fail to defend unlearnable examples of $S$.  This conclusion provides insights into the minimum budget required for adversarial training to effectively defend unlearnable examples.

 The above discussions also explain the results shown in Table \ref{tab-at-cifar10} that poisoned CIFAR-10 with poison budget $8/255$ can not be defended by adversarial training with defense budget $2/255$.
Furthermore, linear separability is achieved for poisoned CIFAR-10 when the gap between poison budget and defense budget reaching $ 8/255$ as shown in Section \ref{section-detection-theo}. 
Therefore, together with Theorem \ref{cor}, our discussions explain why adversarial training with $8/255$ budget cannot defend poisons with $16/255$ budget, as reported in \cite{wang2021fooling}, even though it is only twice as large.

We have established a lower bound for the adversarial budget that can resist unlearnable examples. 
Furthermore,  under some mild assumptions, Theorem \ref{th-upper}  gives a  lower bound for the  poison budget in order to make the poisoned dataset linearly separable, 
whose proof  is provided in Appendix \ref{proofs-6}.

\begin{theorem}
\label{th-upper}
Let the linear model $\F(\boldsymbol{x})=W\boldsymbol{x}+\boldsymbol{b}$, $L(\F(\boldsymbol{x}),y)=\sum_{j\neq y}\max(0, W_j\boldsymbol{x}-W_y\boldsymbol{x}+1)$ be the hinge loss.
Assume that the dataset $S=\{(\boldsymbol{x}_i,y_i)\}_{i=1}^N$ is not linearly separable and  
loss is not less than a constant $\mu_1$,
while $S^{\poi}=\{(\boldsymbol{x}_i+\boldsymbol{\epsilon}_i,y_i)\}_{i=1}^N$ is linearly separable under  $(1, \infty)$-norm regularization and loss is not greater than a constant $\mu_2$.
If  $\mu_1>\mu_2$, then it holds $\max\limits_i \|\boldsymbol{\epsilon}_i\|\geq\frac{\mu_1}{2\mu_2(C-1)}$, where $C$ is the number of classes.
\end{theorem}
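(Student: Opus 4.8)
The plan is to transfer a single good linear model from the poisoned set to the clean set and to measure how much of the resulting loss gap the perturbations $\boldsymbol{\epsilon}_i$ are forced to absorb. Concretely, I would fix $W^{*}$ (together with its bias, if any) to be the model certifying that $S^{\poi}$ is separable under $(1,\infty)$-norm regularization. Separability means its hinge loss on $S^{\poi}$ vanishes, i.e. for every $i$ and every $j\neq y_i$ the quantity $g_{ij}(\boldsymbol{x}_i+\boldsymbol{\epsilon}_i):=(W^{*}_j-W^{*}_{y_i})(\boldsymbol{x}_i+\boldsymbol{\epsilon}_i)+1\le 0$, while the regularization budget controls each weight row, $\|W^{*}_j\|_1\le\mu_2$. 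Because $S$ is \emph{not} separable, the hypothesis ``loss not less than $\mu_1$'' should be read as $L(W)\ge\mu_1$ for every linear model, so in particular $L(W^{*})\ge\mu_1$ on the clean data. These two facts, together with the magnitude condition $\mu_1>\mu_2$, are exactly what the hypotheses supply.

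The key step is a pointwise margin-transfer inequality. For each $i$ and $j\neq y_i$ I would write $g_{ij}(\boldsymbol{x}_i)=g_{ij}(\boldsymbol{x}_i+\boldsymbol{\epsilon}_i)-(W^{*}_j-W^{*}_{y_i})\boldsymbol{\epsilon}_i$, observe that the constant $+1$ (and the bias, if present) cancels out of this difference, and use $g_{ij}(\boldsymbol{x}_i+\boldsymbol{\epsilon}_i)\le 0$. Hölder's inequality with the dual pairing $\ell_1$ on weight rows against $\ell_\infty$ on the poison then yields $g_{ij}(\boldsymbol{x}_i)\le |(W^{*}_j-W^{*}_{y_i})\boldsymbol{\epsilon}_i|\le\|W^{*}_j-W^{*}_{y_i}\|_1\,\|\boldsymbol{\epsilon}_i\|_\infty\le 2\mu_2\|\boldsymbol{\epsilon}_i\|_\infty$, where the factor $2$ is the triangle inequality on the two rows and $\mu_2$ is the $(1,\infty)$-norm budget. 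Hence each clean hinge term obeys $\max(0,g_{ij}(\boldsymbol{x}_i))\le 2\mu_2\|\boldsymbol{\epsilon}_i\|_\infty\le 2\mu_2\max_k\|\boldsymbol{\epsilon}_k\|$.

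Finally I would sum over the $C-1$ incorrect labels and average over the $N$ samples to obtain $\mu_1\le L(W^{*})\le 2(C-1)\mu_2\max_k\|\boldsymbol{\epsilon}_k\|$, and rearrange to $\max_k\|\boldsymbol{\epsilon}_k\|\ge\mu_1/(2\mu_2(C-1))$, with $\mu_1>\mu_2$ guaranteeing the bound is meaningful rather than vacuous. I expect the main obstacle to be conceptual rather than computational: one must read the ``$(1,\infty)$-norm regularization, loss $\le\mu_2$'' hypothesis as giving \emph{exactly-zero} poisoned hinge terms together with a row-norm budget $\mu_2$, since it is this exact vanishing (not the weaker ``clean term $\le$ poisoned term $+$ gap'') that converts the perturbation into the \emph{multiplicative} factor $2\mu_2$; a naive difference-of-losses argument would only deliver the weaker additive estimate $(\mu_1-\mu_2)/(2(C-1))$. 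The remaining care is simply tracking the dual-norm pairing and checking that the bias genuinely drops out of the margin difference.
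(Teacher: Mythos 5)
Your core computation is the same one the paper uses: transfer the margin from the poisoned point back to the clean point via $g_{ij}(\boldsymbol{x}_i)-g_{ij}(\boldsymbol{x}_i+\boldsymbol{\epsilon}_i)=-(W_j-W_{y_i})\boldsymbol{\epsilon}_i$, bound it by H\"older and the triangle inequality as $\|W_j-W_{y_i}\|_1\|\boldsymbol{\epsilon}_i\|_\infty\le 2\|W\|_{1,\infty}\|\boldsymbol{\epsilon}_i\|_\infty$, and rearrange. Where you genuinely differ is the aggregation: you average the transfer over all $N$ samples and all $C-1$ wrong labels, whereas the paper extracts a single sample whose clean loss is at least $\mu_1$ and then a single label index carrying at least a $\frac{1}{C-1}$ fraction of the loss drop. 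Your averaging is cleaner and in fact quietly repairs a small slip in the paper's extraction step (the paper compares one sample's poisoned loss against the dataset average $K$; the correct choice is the sample maximizing the clean-minus-poisoned difference, which is exactly what averaging delivers).

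Two of your framing claims are off, however. First, ``separability means the hinge loss on $S^{\poi}$ vanishes'' is false as a deduction: separability gives $(W_j-W_{y_i})(\boldsymbol{x}_i+\boldsymbol{\epsilon}_i)<0$, not $\le-1$, and under the row-norm budget you cannot rescale $W$ to force unit margins. The paper's actual hypothesis (restated in its appendix) is on the \emph{regularized} loss, $\frac{1}{N}\sum_i L(\F(\boldsymbol{x}_i+\boldsymbol{\epsilon}_i),y_i)+\|W\|_{1,\infty}\le\mu_2$, so the poisoned hinge loss $K$ may be positive, with only $\|W\|_{1,\infty}\le\mu_2-K$ available. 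Second, your claim that a difference-of-losses argument can only yield the additive bound $\frac{\mu_1-\mu_2}{2(C-1)}$ is mistaken: the paper's proof \emph{is} a difference-of-losses argument, and it recovers the multiplicative constant because the row-norm budget shrinks with $K$, giving $\max_i\|\boldsymbol{\epsilon}_i\|_\infty\ge\frac{\mu_1-K}{2(\mu_2-K)(C-1)}\ge\frac{\mu_1}{2\mu_2(C-1)}$, where the last inequality holds for $K\ge0$ precisely because $\mu_1>\mu_2$ --- this is where that hypothesis actually does work (in your zero-loss reading it is never used, only cosmetic). Your proof is valid under your stronger reading and extends to the paper's hypothesis with a one-line change: keep the term $\max\left(0,g_{ij}(\boldsymbol{x}_i+\boldsymbol{\epsilon}_i)\right)$ in your pointwise bound, sum and average to get $\mu_1\le K+2(C-1)(\mu_2-K)\max_k\|\boldsymbol{\epsilon}_k\|_\infty$, and conclude by the same monotonicity in $K$.
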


Theorem \ref{th-upper} indicates that if the poison  budget is not larger than a certain constant, the poisoned training set will not be linearly separable, which results in a failure of robust unlearnable attacks. 
Table \ref{tab-diff-poison} demonstrates that the poison budget $2/255$ is not enough to make Region-16 and Err-min(S) poisoned CIFAR-10 linearly separable. 
Together with Theorem \ref{cor},
our discussions explain why adversarial training with budget $6/255$ is effective in defending against poisons with budget $8/255$ as shown in Table \ref{tab-at-cifar10}.

\section{Experimental results}
\label{sec-exp11}
In this section, experimental results are provided to verify the detection algorithms in Sections \ref{section-detection-simp} and \ref{section-detection-bs}, defense power of stronger data augmentations with adversarial noises in Section \ref{strong-aug}, and the theoretical criteria of adversarial training in Section \ref{def-ats}.
Experimental setups  are given in  Appendix \ref{sec-learn-detail}.
Our codes are available at https://github.com/hala64/udp.

\subsection{Experimental results on poison detection}
\label{sec-experiment1}

Experimental results for  Algorithms \ref{alg-weak} and  \ref{alg-rule}  are given in Table \ref{detect-exp22}. The detection bound  $B$ is $0.7$. For more results on poison detection, please refer to Appendix \ref{sec-7.4}.

\paragraph{Detection performance.}
Results in Table \ref{detect-exp22} show that most of the poisons can be detected by linear models and bias-shifting noise tests with $\boldsymbol{\epsilon}_b=0.5\boldsymbol{e}$, and all of them can be detected by two-layer NN and bias-shifting noise with $\boldsymbol{\epsilon}_b=-0.5\boldsymbol{e}$.
Combined with results in Appendix \ref{sec-7.4}, even for robust unlearnable examples, such as RobustEM and Adv Inducing \cite{wang2021fooling}, or for poisons designed to reduce robust generalization power, such as Hypocritical \cite{tao2022can}, our detection methods still work.

\paragraph{Different poison ratios.}
Although unlearnable examples only work when all training data is poisoned, our detection methods still perform well even only a part of data is poisoned. 
Table \ref{detect-ratios-2} presents detection results for different poison ratios. 
It is shown that as long as $60\%$ of data are unlearnably poisoned, it will be detected by the victims.

\begin{table}[htbp]
    \caption{Detection accuracy  under different poison ratios by Algorithm \ref{alg-rule} with $\boldsymbol{\epsilon}_b=-0.5\boldsymbol{e}$.}
    \label{detect-ratios-2}
    \centering
\begin{tabular}{lcccccc}
\toprule
Ratio
 & 100\% & 80\% & 60\% & 40\% & 20\%\\
\midrule
 Clean data  &\textit{49.08}&\textit{---}&\textit{---}&\textit{---}&\textit{---}\\
 Random(C)  & 100.0  & 86.24 & 77.53 & \textit{65.92} & \textit{53.07}\\
 Region-16 & 99.98 & 87.15 & 78.77 & 70.53 &\textit{57.90}\\
 Err-min(S) & 97.85 & 85.06 & 73.66 & 71.59 & 70.73\\
 Err-min(C) & 100.0 & 85.38 & 75.18 & \textit{62.07} & \textit{53.44}\\
 RobustEM & 99.10 & 86.71 & 73.06 & \textit{60.92} & \textit{51.52} \\
\bottomrule
\end{tabular}

\end{table}

\begin{table*}[!htbp]
\caption{Test accuracy (\%) of different defense methods for poisoned CIFAR-10.}
\label{tab-aug-def1}
\centering
\setlength{\tabcolsep}{4.5pt}
\begin{tabular}{lcccccccccc}
 \toprule
Method/Poison & Region-16 & Err-min(S) & Err-max & RobustEM & NTGA & AR & EntF  & TUE \\
  \midrule
  \setlength{\tabcolsep}{5.5pt}
  No defense & 19.86 & 10.09 & 7.19 & 25.30 & 11.23 & 17.18 & 83.10 & 10.00\\
  AT-based methods & 85.78 & 84.00 & \textbf{84.75} & 81.06 & 84.19 & 85.25 & 73.61 & 83.94\\
  Adversarial Noises & 56.35 & 10.72  & 28.74 & 25.69 & 17.96 & 14.57 & 48.15 & 90.07\\
  UEraser & 82.66 & 86.37 & 47.46 & 78.39 & 82.15 & \textbf{87.21} & 87.40 & 75.96 \\
  ISS & 67.11 & 85.24 & 84.36 & \textbf{83.84} & 85.64 & 85.11 & 77.02 & 84.32\\
  SDA(ours) &  77.20 & 75.62  & 56.96 & 79.21 & 78.48 & 75.81 & \textbf{90.84}  & 73.80 \\
  AN+SDA(ours) &  \textbf{93.51} & \textbf{88.01}  & 61.19 & 79.28 & \textbf{89.00} & 80.20 & 88.17 & \textbf{92.76}\\
  \bottomrule
\end{tabular}
\end{table*}

\begin{table*}[!ht]
\caption{Test accuracy (\%) when conducting adversarial training with budgets $i/255,i=0,\ldots,16$ to defend poisoned CIFAR-10 with the  poison budget $\epsilon=8/255$. We do not use any data augmentation here for better verification of our theorems.}
\label{tab-at-cifar10}
\centering
\setlength{\tabcolsep}{4.5pt}
\begin{tabular}{cccccccccccc}
 \toprule
 Poison & 0 & 1/255 & 2/255 &3/255& 4/255 & 6/255 & 8/255 & 12/255 & 16/255\\
\midrule
   Region-16&19.86 & 24.32 & 29.57 &50.09& 72.13&\textbf{77.03}&72.65 &67.65&62.78\\
  Err-min(S)&10.09&10.01&10.13&18.64&69.92&\textbf{76.53} &72.13&67.23&61.70\\
  RobustEM&25.30&24.92&28.50&33.74&46.01&\textbf{76.69} &72.19&63.16&53.34\\
\bottomrule
\end{tabular}
\end{table*}

\paragraph{False Poisitives and False Negatives}
It is worth noting that our detection methods are designed to assess entire datasets, rather than individual data instances. Therefore, we analyze the false positives and false on different unlearnable datasets under varying detection bound $B$.
With reference to Tables \ref{detect-exp22} and \ref{detect1-cifar10}, we can evaluate the occurrences of FP and FN across a range of clean and poisoned datasets. The results are presented in Table \ref{fp-fn}.
 \begin{table}[H]
 \caption{False positives and false negatives of our detection methods across different datasets and unlearnable examples.}
 \label{fp-fn}
\centering
\setlength{\tabcolsep}{2.5pt}
\begin{tabular}{lccccccccc}
 \toprule
Bound/(FP/FN) & 0.1 & 0.2 & 0.3 & 0.4 & 0.5 & 0.6 & 0.7 & 0.8 & 0.9 \\
  \midrule
  Bias+0.5 & 3/0 & 3/0 & 1/1 & 1/1 & 0/1 & 0/1 & 0/2 & 0/5 & 0/8 \\
  Bias-0.5 & 3/0 & 3/1 & 3/1 & 2/1 & 0/1 & 0/1 & 0/1 & 0/1 & 0/3 \\
  Linear & 3/0 & 3/0 & 3/0 & 2/1 & 0/4 & 0/5 & 0/6 & 0/10 & 0/13 \\
  2-NN & 2/0 & 2/0 & 1/0 & 1/0 & 1/0 & 0/1 & 0/1 & 0/2 & 0/9 \\
  \bottomrule
\end{tabular}
\end{table}

\subsection{Experimental results on poison defense}
\label{exp-defense}
We evaluate the defense power on Algorithm \ref{alg-strong-aug} with AT-based methods and recently proposed defense method UEraser \cite{qin2023learning} and ISS \cite{liu2023image}.
%
%
The results demonstrate that even without adversarial noises, using stronger data augmentation alone can achieve comparable defense power to adversarial training, which indicates that breaking detectability  is a key to defending against unlearnable examples.

Furthermore, our defense method achieves state-of-the-art performance on most of the existing unlearnable examples, such as Region-16, Err-min(S), NTGA and TUE poisons, and achieve the comparable defense power for RobustEM and AR poisons, only perform a little suboptimally for Err-max poisons, as shown in Table \ref{tab-aug-def1} compared to Tables \ref{tab-at-cifar10} and \ref{tab-adv-def}. 
We also evaluate on recently proposed poison method deceiving adversarial training, called EntF \cite{wen2023adversarial}, and results also show advantage of our method. 
Additional experimental results and more discussions are provided in Appendix \ref{strong-aug-app}.

Moreover, since we only conduct adversarial training on the simple two-layer NN, our method is much more time-efficient than AT-based methods, {as shown in Table \ref{tab-noise-time}, adversarial noises generated on two-layer NN is more than 3 times faster than generated on ResNet18.}

\subsection{Evaluation of criteria between the poison budget and the defense budget}
\label{sec-experiment2}

Table \ref{tab-at-cifar10} shows the test accuracy under different adversarial defense budgets.
For adversarial training with budget $\epsilon\leq2/255$ and unlearnable poisoning attacks with budget $\eta=8/255\geq4\epsilon$, the test accuracy is less than $30\%$, which is much lower than the linear separability rate $46.53\%$ of CIFAR-10.
This verifies Theorem \ref{th-def1}  that adversarial training fails when defense budget is too small.

Additionally, with the increase of defense budgets, the defense power of adversarial training initially increases rapidly, but then begins to gradually decrease.
This is because when the defense budget increases, the gap between poison and defense budgets decreases, eventually becoming too small to achieve linear separability. As shown in Table \ref{tab-at-cifar10}, adversarial training is effective when defense budget reaches to $6/255$, since the gap $2/255$ is not large enough to make the poisoned dataset linearly separable, which verifies Theorems \ref{cor} and \ref{th-upper}.

It is worth noting that different from \citep{tao2021better}, adversarial training here is the tool to maintain accuracy rather than robustness. Therefore, the effective defense budget here is smaller than \citep{tao2021better}.
From Table \ref{tab-diff-poison}, RobustEM never becomes  linearly separable, indicating that achieve linear separability is a sufficient but not necessary condition for generating unlearnable examples.
With further increase in the defense budget, the trade-off between accuracy and robustness emerges \cite{tsipras2018robustness}, leading to a gradual drop in test accuracy. 
For more discussions and experiments on this topic, please refer to Appendix \ref{poison-and-def}.

\begin{table}[H]
\caption{Linear separability rate of poisoned CIFAR-10 with different poison budget $\eta$.}
\label{tab-diff-poison}
\centering
\setlength{\tabcolsep}{3.5pt}
\begin{tabular}{cccccccc}
 \toprule
  Poison/budget & 0 & 1/255 & 2/255 & 4/255 & 6/255 & 8/255\\
  \midrule
  Region-16 & 46.53 & 67.85 & 89.06 & 98.32 & 99.65& 99.87\\
  Err-min(S) & 46.53 & 76.63 & 93.15 & 99.87 & 99.97& 99.99\\
  RobustEM & 46.53 & 55.44 & 61.29 & 69.03 & 74.62 & 78.49\\
  \bottomrule
\end{tabular}
\end{table}

\section{Conclusion}
\label{sec-conc}

In this paper, we demonstrate that unlearnable examples can be easily detected.
We prove that linear separability always exists for certain unlearnable poisoned dataset, and propose effective detection methods. 
We use stronger data augmentations with adversarial noises of simple networks to achieve effective defense for unlearnable examples.
Furthermore, we derive a certified upper bound for poison budget relative to the adversarial budget on adversarial training.

\paragraph{Limitations and future work.}
%
From Table \ref{tab-at-cifar10}, the results in Theorem \ref{th-def1} have rooms for improvements, that is, tighter upper bounds for poison budget are possible.
Certified defense methods against unlearnable examples need to be developed in the future. 
It is desirable to craft more potent defense methods against Err-max poisons.
Moreover, in the pursuit of privacy preservation, it is imperative to create more sophisticated unlearnable examples that resist existing detection and defense methods.

\textbf{Acknowledgement}
This work is supported by NKRDP grant No.2018YFA0306702 and NSFC grant No.11688101.

\bibliography{ref}

\appendix
\onecolumn


\section{Proofs}
\label{proofs}

\subsection{Proof of Theorem \ref{th-ex1}}
\label{proofs-1}
\begin{lemma}[McDiarmid's Inequality \cite{mcdiarmid1989method}]
Let $X_1,X_2,\cdots,X_n$ be independent random variables on $\mathcal{X}_1,\mathcal{X}_2,\cdots,\mathcal{X}_n$ and $f:\mathcal{X}_1\times\mathcal{X}_2\times\cdots\times\mathcal{X}_n\to\mathbb{R}$ be a multivariate function. 
If there exists positive constants $c_1,c_2,\cdots,c_n$, such that for all $(\boldsymbol{x}_1,\boldsymbol{x}_2,\cdots,\boldsymbol{x}_n)\in\mathcal{X}_1\times\mathcal{X}_2\times\cdots\times\mathcal{X}_n$ and $i\in[n]$, 
it has
$$\sup\limits_{\boldsymbol{x}_i^{\prime}\in\mathcal{X}_i}|f(\boldsymbol{x}_1,\cdots,\boldsymbol{x}_{i-1},\boldsymbol{x}_i^{\prime},\boldsymbol{x}_{i+1},\cdots,\boldsymbol{x}_n)-f(\boldsymbol{x}_1,\cdots,\boldsymbol{x}_{i-1},\boldsymbol{x}_i,\boldsymbol{x}_{i+1},\cdots,\boldsymbol{x}_n)|\leq c_i,$$ then for any $\epsilon>0$, the following inequalities hold
\begin{equation*}
\begin{array}{l}
\Prob(f(X_1,X_2,\cdots,X_n)-\mathbb{E}\left[f(X_1,X_2,\cdots,X_n)\right]\geq\epsilon)\leq e^{-\frac{2\epsilon^2}{\sum_{i=1}^n c_i^2}},\\
\Prob(f(X_1,X_2,\cdots,X_n)-\mathbb{E}\left[f(X_1,X_2,\cdots,X_n)\right]\leq-\epsilon)\leq e^{-\frac{2\epsilon^2}{\sum_{i=1}^n c_i^2}}.\\
\end{array}
\end{equation*}
\end{lemma}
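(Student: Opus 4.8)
The plan is to prove the inequality by the method of bounded differences: construct the Doob martingale of $f$ and apply a Chernoff-type argument to its increments. Abbreviate $f=f(X_1,\dots,X_n)$ and write $\mathbb{E}$ for expectation. First I would introduce the filtration $\mathcal{F}_i=\sigma(X_1,\dots,X_i)$ for $i=0,1,\dots,n$ (with $\mathcal{F}_0$ trivial) and set
$$V_i=\mathbb{E}[f\mid\mathcal{F}_i],\qquad i=0,1,\dots,n,$$
so that $V_0=\mathbb{E}[f]$, $V_n=f$, and the martingale differences $D_i=V_i-V_{i-1}$ satisfy $\mathbb{E}[D_i\mid\mathcal{F}_{i-1}]=0$ and telescope as $f-\mathbb{E}[f]=\sum_{i=1}^n D_i$. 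Integrability of $f$ is guaranteed because $f$ has bounded oscillation in each coordinate.

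The key structural step is to bound the conditional range of each increment. Using the independence of the coordinates, $D_i$ can be written as a difference of two conditional expectations of $f$ that differ only in the value of the $i$-th argument; the bounded-differences hypothesis bounds this pointwise by $c_i$, and averaging over $X_{i+1},\dots,X_n$ while conditioning on $X_1,\dots,X_{i-1}$ preserves the bound. Hence, conditionally on $\mathcal{F}_{i-1}$, the variable $D_i$ lies almost surely in an interval of length at most $c_i$.

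With this in hand I run the standard Chernoff argument. For $\lambda>0$, Markov's inequality applied to $e^{\lambda(f-\mathbb{E}[f])}$ gives
$$\Prob\!\left(f-\mathbb{E}[f]\ge\epsilon\right)\le e^{-\lambda\epsilon}\,\mathbb{E}\!\left[e^{\lambda\sum_{i=1}^n D_i}\right].$$
To control the moment-generating factor I invoke Hoeffding's lemma: a random variable $Z$ supported in an interval of length $\ell$ with $\mathbb{E}[Z\mid\mathcal{F}_{i-1}]=0$ obeys $\mathbb{E}[e^{\lambda Z}\mid\mathcal{F}_{i-1}]\le e^{\lambda^2\ell^2/8}$, which follows from convexity of $z\mapsto e^{\lambda z}$ on the interval together with a second-order estimate of the resulting log-MGF. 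Applying this to $Z=D_i$ with $\ell=c_i$ and peeling off the conditional expectations one index at a time via the tower property yields
$$\mathbb{E}\!\left[e^{\lambda\sum_{i=1}^n D_i}\right]\le\prod_{i=1}^n e^{\lambda^2 c_i^2/8}=e^{\lambda^2\sum_{i=1}^n c_i^2/8}.$$
Choosing $\lambda=4\epsilon/\sum_{i=1}^n c_i^2$ gives the first bound $e^{-2\epsilon^2/\sum_{i=1}^n c_i^2}$, and the lower-tail bound follows by running the identical argument for $-f$, whose increments satisfy the same conditional range condition.

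The main obstacle is the conditional range bound in the second step: one must verify that the independence of the coordinates lets the pointwise bounded-differences inequality survive both the integration over $(X_{i+1},\dots,X_n)$ and the conditioning on $(X_1,\dots,X_{i-1})$, so that each $D_i$ is controlled by the single constant $c_i$ rather than by an accumulated quantity. Once this and Hoeffding's lemma are established, the Chernoff optimization is routine.
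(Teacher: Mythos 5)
Your proposal is correct and is exactly the classical method-of-bounded-differences proof of McDiarmid's inequality---Doob martingale $V_i=\mathbb{E}[f\mid\mathcal{F}_i]$, the conditional range bound $D_i\in[L_i,U_i]$ with $U_i-L_i\le c_i$ (where independence is indeed what lets the pointwise hypothesis survive integrating out $X_{i+1},\dots,X_n$ and conditioning on $X_1,\dots,X_{i-1}$), Hoeffding's lemma, and Chernoff optimization at $\lambda=4\epsilon/\sum_{i=1}^n c_i^2$---which is precisely the argument in the cited source \cite{mcdiarmid1989method}. The paper itself states this lemma without proof, importing it as a known tool for Theorems \ref{th-ex1} and \ref{th-ex1-bc}, so your argument matches the intended (referenced) proof rather than diverging from anything in the paper.
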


\begin{proof}[Proof of Theorem \ref{th-ex1}]
By McDiarmid's inequality, $\forall \gamma>0$ and $i\neq j$, it holds that
$$\textup{Prob}(\left<\boldsymbol{v}_i,\boldsymbol{v}_j\right>\geq\gamma)\leq e^{-\frac{\gamma^2}{2d\epsilon^4}},$$
and $\forall i,j$ it holds that
$$\textup{Prob}\left(\left|\left<\boldsymbol{x}_i,\boldsymbol{v}_j\right>\right|\geq\gamma\right)\leq 2e^{-\frac{\gamma^2}{2d\epsilon^2}}.$$
Let $\gamma=\frac{d\epsilon^2}{4}$, then it has
$$\textup{Prob}\left(\left<\boldsymbol{v}_i,\boldsymbol{v}_j\right>\geq\frac{d\epsilon^2}{4}\right)\leq e^{-\frac{d}{32}}, i\neq j$$
and
$$\textup{Prob}\left(\left|\left<\boldsymbol{x}_i,\boldsymbol{v}_j\right>\right|\geq\frac{d\epsilon^2}{3}\right)\leq 2e^{-\frac{d\epsilon^2}{18}}.$$
Denote the linear classifier $\F(\boldsymbol{x})=W_{\boldsymbol{v}}\boldsymbol{x}$, where
$$W_{\boldsymbol{v}}=[\boldsymbol{v}_1,\cdots,\boldsymbol{v}_C]^T\in\R^{C\times d}.$$
Then given $(\boldsymbol{x}+\boldsymbol{v}_{y},y)\in D^{\poi}$, it has
$$\F\left(\boldsymbol{x}+\boldsymbol{v}_{y}\right)=W_{\boldsymbol{v}}(\boldsymbol{x}+\boldsymbol{v}_{y})=\left(\left<\boldsymbol{v}_1,\boldsymbol{x}+\boldsymbol{v}_{y}\right>,\cdots,\left<\boldsymbol{v}_C,\boldsymbol{x}+\boldsymbol{v}_{y}\right>\right)^T.$$
Then if $\forall k\in\left[C\right]$, $|\left<\boldsymbol{v}_k,\boldsymbol{x}\right>|<\frac{d\epsilon^2}{3}$ and $\forall k\in\left[C\right]\backslash\{y\},\left<\boldsymbol{v}_k,\boldsymbol{v}_y\right><\frac{d\epsilon^2}{4}$, it holds that
$$\left<\boldsymbol{v}_y,\boldsymbol{x}+\boldsymbol{v}_y\right>>\frac{2d\epsilon^2}{3}>\frac{7d\epsilon^2}{12}>\left<\boldsymbol{v}_y,\boldsymbol{x}+\boldsymbol{v}_k\right>, \forall k\in\left[C\right]\backslash\{y\},$$
which directly leads to correctness of classifier $\F$.
Therefore, it follows that
\begin{equation*}
\begin{array}{ll}
&\textup{Prob}\left[\arg\max\limits_j\F(\boldsymbol{x}+\boldsymbol{v}_y)_j=y\right] \vspace{1ex} \\
\geq&\textup{Prob}\left[|\left<\boldsymbol{v}_k,\boldsymbol{x}\right>|<\frac{d\epsilon^2}{4},\forall k\in[C] \cap \left<\boldsymbol{v}_k,\boldsymbol{v}_y\right><\frac{d\epsilon^2}{4},\forall k\in\left[C\right]\backslash\{y\}\right] \vspace{1ex} \\
\geq&1-\left(1-\left(1-2e^{-\frac{d\epsilon^2}{18}}\right)^C\right)-\left(1-\left(1-e^{-\frac{d}{32}}\right)^{C-1}\right) \vspace{1ex} \\
\geq&1-2Ce^{-\frac{d\epsilon^2}{18}}-(C-1)e^{-\frac{d}{32}}.
\end{array}
\end{equation*}
Then the probability of $\arg\max\limits_j\F(\boldsymbol{x}_i+\boldsymbol{v}_{y_i})_j=y_i, \forall i\in\left[N\right]$ is at least
\begin{equation*}
\begin{array}{ll}
&1-N\left(1-\textup{Prob}\left[\arg\max\limits_j\F(\boldsymbol{x}+\boldsymbol{v}_y)_j=y\right]\right) \vspace{1ex} \\
\geq & 1-NC\left(2e^{-\frac{d\epsilon^2}{18}}+e^{-\frac{d}{32}}\right),
\end{array}
\end{equation*}
which complete the proof.
\end{proof}

\subsection{Proof of Theorem \ref{th-ex1-bc}}
\label{proofs-2}
\begin{proof}
The proof is similar to the proof of Theorem \ref{th-ex1}. \\
By McDiarmid's inequality, $\forall \gamma>0$ and $i\neq j$, it holds that
$$\textup{Prob}(\left<\boldsymbol{v}_i,\boldsymbol{v}_j\right>\geq\gamma)\leq e^{-\frac{\gamma^2k}{2d^2\epsilon^4}},$$
and $\forall i,j$ it holds that
$$\textup{Prob}\left(\left|\left<\boldsymbol{x}_i,\boldsymbol{v}_j\right>\right|\geq\gamma\right)\leq 2e^{-\frac{\gamma^2k}{2d^2\epsilon^2}}.$$
The remaining details are the same as Theorem \ref{th-ex1}.
\end{proof}

\subsection{Proof of Theorem \ref{th-def1}}
\label{proofs-3}
\begin{proof}
For class-wise noise vectors $\left\{\boldsymbol{v}_i\right\}_{i=1}^C$ satisfy that $\forall i\in\left[C\right]$ and $j\in\left[d\right]$, $(\boldsymbol{v}_i)_j$ is i.i.d. and obey distribution $\Delta(4\epsilon)$, where
$$\Delta(\epsilon)=2\epsilon\cdot \text{Bernoulli}\left(\frac{1}{2}\right)-\epsilon,$$
that is,
 $(\boldsymbol{v}_i)_j$ with $\frac{1}{2}$ probability equals $4\epsilon$ and $-4\epsilon$ respectively.

Denote the poisoned set as $S^{\poi}=\left\{(\boldsymbol{x}_i+\boldsymbol{v}_{y_i},y_i)\right\}_{i=1}^N$.
Following Theorem \ref{th-ex1}, with probability at least $1-NC\left(2e^{-\frac{d\epsilon^2}{8}}+e^{-\frac{d}{32}}\right)$,
the linear model $\boldsymbol{V}=[\boldsymbol{v}_1,\cdots,\boldsymbol{v}_C]^T$ can classify the data set $S'=\left\{(\boldsymbol{x}_i+\frac{\boldsymbol{v}_{y_i}}{2},y_i)\right\}_{i=1}^N$ correctly.

By McDiarmid's inequality, $\forall i\neq j$, it holds that
$$\textup{Prob}\left(\left<\boldsymbol{v}_i,\boldsymbol{v}_j\right>\geq\frac{16d\epsilon^2}{5}\right)\leq e^{-\frac{d}{50}}.$$
Then such random vector $\left\{\boldsymbol{v}_i\right\}_{i=1}^C$ holds with probability at least $1-\frac{C(C-1)}{2} e^{-\frac{d}{50}}$.
Therefore, with probability at least
$$1-NC\left(2e^{-\frac{d\epsilon^2}{8}}+e^{-\frac{d}{32}}\right)-\frac{C(C-1)}{2} e^{-\frac{d}{50}}>1-NC\left(2e^{-\frac{d\epsilon^2}{8}}+2e^{-\frac{d}{50}}\right)>0$$
that $\left\{\boldsymbol{v}_i\right\}_{i=1}^C$ satisfies all conditions above, which implies such $\left\{\boldsymbol{v}_i\right\}_{i=1}^C$ exists in $\B_{\infty}(4\epsilon)$ ball.

Now we choose such $\left\{\boldsymbol{v}_i\right\}_{i=1}^C$ as the injected noise to generate the poisoned data set $S^{\poi}$. Then for all perturbation $||\boldsymbol{\delta}_i||_{\infty}\leq\epsilon$ and the data set $\widetilde{S}=\left\{(\boldsymbol{x}_i+\boldsymbol{v}_{y_i}+\boldsymbol{\delta}_i,y_i)\right\}_{i=1}^N \subset B_{\infty}(S^{\poi},\epsilon)$, it holds that
\begin{equation*}
\renewcommand{\arraystretch}{1.3}
\begin{array}{ll}
&\left<\boldsymbol{x}_i+\boldsymbol{v}_{y_i}+\boldsymbol{\delta}_i,\boldsymbol{v}_{y_i}\right>- \left<\boldsymbol{x}_i+\boldsymbol{v}_{y_i}+\boldsymbol{\delta}_i,\boldsymbol{v}_k\right> \vspace{1ex} \\
 =&\left<\boldsymbol{x}_i+\frac{\boldsymbol{v}_{y_i}}{2},\boldsymbol{v}_{y_i}\right>- \left<\boldsymbol{x}_i+\frac{\boldsymbol{v}_{y_i}}{2},\boldsymbol{v}_k\right> + \left<\frac{\boldsymbol{v}_{y_i}}{2}+\boldsymbol{\delta}_i,\boldsymbol{v}_{y_i}\right>- \left<\frac{\boldsymbol{v}_{y_i}}{2}+\boldsymbol{\delta}_i,\boldsymbol{v}_k\right> \vspace{1ex} \\
 =&\left<\boldsymbol{x}_i+\frac{\boldsymbol{v}_{y_i}}{2},\boldsymbol{v}_{y_i}\right>- \left<\boldsymbol{x}_i+\frac{\boldsymbol{v}_{y_i}}{2},\boldsymbol{v}_k\right> +\left<\boldsymbol{\delta}_i,\boldsymbol{v}_{y_i}-\boldsymbol{v}_k\right> +8d\epsilon^2-\frac{1}{2}\left<\boldsymbol{v}_{y_i},\boldsymbol{v}_k\right> \vspace{1ex} \\
 \geq&\left<\boldsymbol{\delta}_i,\boldsymbol{v}_{y_i}-\boldsymbol{v}_k\right>+\frac{32}{5}d\epsilon^2 \vspace{1ex} \\
 \geq&\frac{32}{5}d\epsilon^2-\|\boldsymbol{\delta}_i\|_{\infty}\|\boldsymbol{v}_{y_i}-\boldsymbol{v}_k\|_1 \vspace{1ex} \\
 \geq&\frac{32}{5}d\epsilon^2-\epsilon\sqrt{d}\|\boldsymbol{v}_{y_i}-\boldsymbol{v}_k\|_2 \vspace{1ex} \\
 =&\frac{32}{5}d\epsilon^2-\epsilon\sqrt{d}\sqrt{32d\epsilon^2+\left<\boldsymbol{v}_{y_i},\boldsymbol{v}_k\right>} \vspace{1ex} \\
 \geq&\frac{32}{5}d\epsilon^2-\sqrt{32+\frac{16}{5}}d\epsilon^2 \vspace{1ex}  \\
 >&0
\end{array}
\end{equation*}
for all $k\neq y_i$ and $i=[N]$.
Therefore, for all dataset $\widetilde{S} \subset \B_{\infty}(S^{\poi},\epsilon)$, the linear model $V$ can classify them correctly.
\end{proof}

\subsection{Proof of Theorem \ref{cor}}
\label{proofs-5}

\begin{lemma}
\label{th-append}
Let $S=\left\{(\boldsymbol{x}_i,y_i)\right\}_{i=1}^N\subset\I^d\times[C]$ be a (clean) training set and
 $\boldsymbol{v}_i\in\{-\epsilon,\epsilon\}^d$,  $i\in[C]$
is the class-wise poison, 
such that the linear network $\F(\boldsymbol{x})=[\boldsymbol{v}_1,\cdots,\boldsymbol{v}_C]^T\boldsymbol{x}$ can correctly classify the poisoned training set $\{\boldsymbol{x}_i+\boldsymbol{v}_{y_i},y_i\}_{i=1}^N$. Then for any $\eta\in\R_{>0}$, there exists a poisoned training set $S^{\poi}$ of $S$ with poison budget $\eta+\epsilon$, 
such that  $\B_{\infty}(S^{\poi},\eta)$  is linearly separable by $\F$.
\begin{proof}
We will show that  $S^{\poi}=\{\boldsymbol{x}_i+\boldsymbol{v}_{y_i}+\eta\cdot \sign(\boldsymbol{v}_{y_i}),y_i\}_{i=1}^N$ is a poisoned set verifying the theorem.
For all adversarial perturbation $\|\boldsymbol{\delta}\|_{\infty}\leq\eta$, 
$\forall \left(\boldsymbol{x}+\boldsymbol{v}_y+\eta\cdot \sign(\boldsymbol{v}_y),y\right)\in S^{\poi}$, when $i\neq y$, it holds that
\begin{equation*}
\renewcommand{\arraystretch}{1.3}
\begin{array}{ll}
&\F(\boldsymbol{x}+\boldsymbol{v}_y+\eta\cdot \sign(\boldsymbol{v}_y)+\boldsymbol{\delta})_y-\F(\boldsymbol{x}+\boldsymbol{v}_y+\eta\cdot \sign(\boldsymbol{v}_y)+\boldsymbol{\delta})_i \vspace{1ex} \\
=&\left<\boldsymbol{x}+\boldsymbol{v}_y+\eta\cdot \sign(\boldsymbol{v}_y)+\boldsymbol{\delta},\boldsymbol{v}_y-\boldsymbol{v}_i\right> \vspace{1ex} \\
> &\left<\eta\cdot \sign(\boldsymbol{v}_y)+\boldsymbol{\delta},\boldsymbol{v}_y-\boldsymbol{v}_i\right> (\textup{By the correct classification of}\, (\boldsymbol{x}+\boldsymbol{v}_y,y)) \vspace{1ex} \\
=& \eta\epsilon d +\epsilon \left<\boldsymbol{\delta},\sign(\boldsymbol{v}_y)-\sign(\boldsymbol{v}_i)\right>-\eta\epsilon\left<\sign(\boldsymbol{v}_y),\sign(\boldsymbol{v}_i)\right> \vspace{1ex} \\
\geq&  \eta\epsilon d - \eta\epsilon\|\sign(\boldsymbol{v}_y)-\sign(\boldsymbol{v}_i)\|_1 - \eta\epsilon \left<\sign(\boldsymbol{v}_y),\sign(\boldsymbol{v}_i)\right> \vspace{1ex} \\
= & \eta\epsilon (d - \|\sign(\boldsymbol{v}_y)-\sign(\boldsymbol{v}_i)\|_1 - \left<\sign(\boldsymbol{v}_y),\sign(\boldsymbol{v}_i)\right>).
\end{array}
\end{equation*}
Since $\left<\boldsymbol{a},\boldsymbol{b}\right>+\|\boldsymbol{a}-\boldsymbol{b}\|_1=d$ for $\boldsymbol{a},\boldsymbol{b}\in\{-1,+1\}^d$, it holds that
$$d - \|\sign(\boldsymbol{v}_y)-\sign(\boldsymbol{v}_i)\|_1 -  \left<\sign(\boldsymbol{v}_y),\sign(\boldsymbol{v}_i)\right> =0.$$
Therefore, $\F(\boldsymbol{x}+\boldsymbol{v}_y+\eta\cdot \sign(\boldsymbol{v}_y)+\boldsymbol{\delta})_y-\F(\boldsymbol{x}+\boldsymbol{v}_y+\eta\cdot \sign(\boldsymbol{v}_y)+\boldsymbol{\delta})_i>0$, that is, the linear network $\F(\boldsymbol{x})=[\boldsymbol{v}_1,\cdots,\boldsymbol{v}_C]^T\boldsymbol{x}$ can correctly classify $\B(S^{\poi},\epsilon)$.
It is clear that $S^{\poi}$ has poison budget $ \eta+\epsilon$.
The theorem is proved.
\end{proof}
\end{lemma}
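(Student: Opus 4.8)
The plan is to exhibit an explicit poison that amplifies the original class-wise poison $\boldsymbol{v}_i$ along its own sign direction, so that the margin gained exactly compensates for any adversarial perturbation of radius $\eta$. Concretely, I would take
$$S^{\poi}=\{\boldsymbol{x}_i+\boldsymbol{v}_{y_i}+\eta\cdot\sign(\boldsymbol{v}_{y_i}),\,y_i\}_{i=1}^N.$$
Because every coordinate of $\boldsymbol{v}_{y_i}$ equals $\pm\epsilon$, the added terms $\boldsymbol{v}_{y_i}$ and $\eta\cdot\sign(\boldsymbol{v}_{y_i})$ agree in sign coordinate-by-coordinate, so their entrywise sum has magnitude $\epsilon+\eta$. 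This immediately gives the claimed poison budget $\|\boldsymbol{v}_{y_i}+\eta\sign(\boldsymbol{v}_{y_i})\|_\infty=\epsilon+\eta$, and I would keep using the same linear classifier $\F(\boldsymbol{x})=[\boldsymbol{v}_1,\cdots,\boldsymbol{v}_C]^T\boldsymbol{x}$ rather than constructing a new one.

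Next I would bound the logit gap uniformly over all admissible perturbations. Fix any point $(\boldsymbol{x}+\boldsymbol{v}_y+\eta\sign(\boldsymbol{v}_y),y)\in S^{\poi}$, any adversarial $\boldsymbol{\delta}$ with $\|\boldsymbol{\delta}\|_\infty\le\eta$, and any $i\ne y$. Expanding linearity,
$$\F(\cdot)_y-\F(\cdot)_i=\left\langle \boldsymbol{x}+\boldsymbol{v}_y+\eta\sign(\boldsymbol{v}_y)+\boldsymbol{\delta},\,\boldsymbol{v}_y-\boldsymbol{v}_i\right\rangle.$$
The hypothesis that $\F$ already separates the original poisoned set means $\langle\boldsymbol{x}+\boldsymbol{v}_y,\boldsymbol{v}_y-\boldsymbol{v}_i\rangle>0$, so it suffices to prove that the leftover term $\langle\eta\sign(\boldsymbol{v}_y)+\boldsymbol{\delta},\,\boldsymbol{v}_y-\boldsymbol{v}_i\rangle$ is nonnegative. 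Substituting $\boldsymbol{v}_y=\epsilon\sign(\boldsymbol{v}_y)$ and $\boldsymbol{v}_i=\epsilon\sign(\boldsymbol{v}_i)$, the deterministic part becomes $\eta\epsilon\bigl(d-\langle\sign(\boldsymbol{v}_y),\sign(\boldsymbol{v}_i)\rangle\bigr)$, while applying H\"older's inequality to the $\boldsymbol{\delta}$ part with $\|\boldsymbol{\delta}\|_\infty\le\eta$ costs at most $\eta\epsilon\,\|\sign(\boldsymbol{v}_y)-\sign(\boldsymbol{v}_i)\|_1$. The worst-case bound for the leftover term is therefore $\eta\epsilon\bigl(d-\|\sign(\boldsymbol{v}_y)-\sign(\boldsymbol{v}_i)\|_1-\langle\sign(\boldsymbol{v}_y),\sign(\boldsymbol{v}_i)\rangle\bigr)$.

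The crux of the argument is the elementary identity that for any $\boldsymbol{a},\boldsymbol{b}\in\{-1,+1\}^d$ one has $\langle\boldsymbol{a},\boldsymbol{b}\rangle+\|\boldsymbol{a}-\boldsymbol{b}\|_1=d$, verified coordinatewise (agreeing signs contribute $1+0$, disagreeing signs contribute $-1+2$). Applying it with $\boldsymbol{a}=\sign(\boldsymbol{v}_y)$ and $\boldsymbol{b}=\sign(\boldsymbol{v}_i)$ makes the worst-case bound collapse to exactly zero, so the leftover term is $\ge 0$ and the full logit gap is strictly positive. Since this holds for every $i\ne y$, every $\boldsymbol{\delta}$ of radius $\eta$, and every sample, $\F$ correctly classifies all of $\B_\infty(S^{\poi},\eta)$, proving the claim. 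I expect the main delicacy to be not any hard inequality but the \emph{alignment}: the amplified poison must point exactly along $\sign(\boldsymbol{v}_{y_i})$ so that the H\"older slack $\eta\epsilon\|\sign(\boldsymbol{v}_y)-\sign(\boldsymbol{v}_i)\|_1$ is absorbed precisely by the deterministic gain, hitting the boundary case; any other shift direction would leave a residual term and fail to guarantee separability.
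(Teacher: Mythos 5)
Your proposal is correct and matches the paper's proof essentially line for line: the same amplified poison $\boldsymbol{x}_i+\boldsymbol{v}_{y_i}+\eta\cdot\sign(\boldsymbol{v}_{y_i})$, the same classifier $\F$, the same reduction via the correct-classification hypothesis followed by H\"older's inequality, and the same coordinatewise identity $\left<\boldsymbol{a},\boldsymbol{b}\right>+\|\boldsymbol{a}-\boldsymbol{b}\|_1=d$ on $\{-1,+1\}^d$ that makes the worst-case slack collapse to exactly zero. Your closing remark about the alignment of the shift direction being the delicate point is a fair observation, but substantively the argument is identical to the paper's.
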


\begin{proof}[Proof of Theorem \ref{cor}]
By Theorem \ref{th-ex1}, if the gap $\epsilon$ satisfies that $1-NC\left(2e^{-\frac{d\epsilon^2}{18}}+e^{-\frac{d}{32}}\right)>0$, such linear network $\F$ in Lemma \ref{th-append} exists.
Therefore, the gap $\epsilon$ satisfies 
$$\left(2e^{-\frac{d\epsilon^2}{18}}+e^{-\frac{d}{32}}\right)<\frac{1}{NC}.$$
As images lie in $[0,1]$, it holds that $\epsilon\leq1$. 
Therefore, it holds that
$\left(2e^{-\frac{d\epsilon^2}{18}}+e^{-\frac{d}{32}}\right)\leq 3e^{-\frac{d\epsilon^2}{32}}$. Then as long as $\epsilon$ satisfies 
$$3e^{-\frac{d\epsilon^2}{32}}<\frac{1}{NC},$$ that is, $\epsilon=\Omega(\sqrt{\frac{\log NC}{d}})$, the condition of Theorem \ref{th-ex1} is satisfied. The proof is completed.
\end{proof}

\vskip10pt
\subsection{Proof of Theorem \ref{th-upper}}
\label{proofs-6}

\begin{theorem}[restated]
\label{th-upper-re}
Let the linear model $\F(\boldsymbol{x})=W\boldsymbol{x}+\boldsymbol{b}$, $L(\F(\boldsymbol{x}),y)=\sum_{j\neq y}\max(0, W_j\boldsymbol{x}-W_y\boldsymbol{x}+1)$ be the hinge loss.
Assume that the dataset $S=\{(\boldsymbol{x}_i,y_i)\}_{i=1}^N$ is not linearly separable and  
$\frac{1}{N}\sum\limits_{i=1}^N L(\F(\boldsymbol{x}_i),y_i) \geq \mu_1$ 
for a constant $\mu_1$,
while $S^{\poi}=\{(\boldsymbol{x}_i+\boldsymbol{\epsilon}_i,y_i)\}_{i=1}^N$ is linearly separable under  $(1, \infty)$-norm regularization and $\frac{1}{N}\sum\limits_{i=1}^N L(\F(\boldsymbol{x}_i+\boldsymbol{\epsilon}_i),y_i)+\|W\|_{1,\infty}\leq \mu_2$ for a constant $\mu_2$.
If  $\mu_1>\mu_2$, then it holds $\max\limits_i \|\boldsymbol{\epsilon}_i\|\geq\frac{\mu_1}{2\mu_2(C-1)}$, where $C$ is the number of classes.
\end{theorem}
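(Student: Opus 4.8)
The plan is to control how much the poison perturbations can change the \emph{average} hinge loss of the single fixed model $\F$, and then to convert the hypothesized gap $\mu_1>\mu_2$ into a lower bound on the perturbation size via a stability (Lipschitz) estimate together with the regularization budget. First I would record a per-sample stability bound. Writing $L(\F(\boldsymbol{x}),y)=\sum_{j\neq y}\max\left(0,(W_j-W_y)\boldsymbol{x}+1\right)$ and using that $t\mapsto\max(0,t)$ is $1$-Lipschitz, replacing $\boldsymbol{x}_i$ by $\boldsymbol{x}_i+\boldsymbol{\epsilon}_i$ changes the $j$-th summand by at most $|(W_j-W_{y_i})\boldsymbol{\epsilon}_i|$. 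By H\"older's inequality and the convention $\|W\|_{1,\infty}=\max_j\|W_j\|_1$, this is at most $\|W_j-W_{y_i}\|_1\|\boldsymbol{\epsilon}_i\|_\infty\le 2\|W\|_{1,\infty}\|\boldsymbol{\epsilon}_i\|_\infty$, where $\|\cdot\|_\infty$ is the norm appearing in the conclusion (matching the $\ell_\infty$ poison budget). Summing over the $C-1$ indices $j\neq y_i$ yields
$$L(\F(\boldsymbol{x}_i),y_i)\le L(\F(\boldsymbol{x}_i+\boldsymbol{\epsilon}_i),y_i)+2(C-1)\|W\|_{1,\infty}\|\boldsymbol{\epsilon}_i\|_\infty.$$

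Next I would average this over $i$, abbreviating $A=\frac1N\sum_i L(\F(\boldsymbol{x}_i+\boldsymbol{\epsilon}_i),y_i)$, $B=\|W\|_{1,\infty}$ and $M=\max_i\|\boldsymbol{\epsilon}_i\|_\infty$, and then invoke the two hypotheses $\mu_1\le\frac1N\sum_i L(\F(\boldsymbol{x}_i),y_i)$ and $A+B\le\mu_2$. These give $\mu_1\le A+2(C-1)BM$ and $A\le\mu_2-B$, hence $\mu_1-\mu_2+B\le 2(C-1)BM$. Here $B>0$: if $W=0$ then $\F$ is constant, so the clean and poisoned losses coincide, forcing $\mu_1\le A\le\mu_2$ and contradicting $\mu_1>\mu_2$. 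Dividing by $2(C-1)B$ therefore gives $M\ge\frac{\mu_1-\mu_2}{2(C-1)B}+\frac{1}{2(C-1)}$, and finally replacing $B$ by its upper bound $\mu_2$ in the first term yields
$$M\ge\frac{\mu_1-\mu_2}{2(C-1)\mu_2}+\frac{\mu_2}{2(C-1)\mu_2}=\frac{\mu_1}{2(C-1)\mu_2},$$
which is exactly the claimed bound.

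The main obstacle is the bookkeeping of the regularization term $B=\|W\|_{1,\infty}$: the clean-minus-poison loss difference is intrinsically proportional to $B$, yet the target numerator must read $\mu_1$ rather than $\mu_1-\mu_2$. The key is to keep the additive $+B$ produced by $A\le\mu_2-B$ rather than discarding it, and only at the very end substitute $B\le\mu_2$; this is precisely what recombines $\frac{\mu_1-\mu_2}{2(C-1)\mu_2}$ with $\frac{1}{2(C-1)}$ into $\frac{\mu_1}{2(C-1)\mu_2}$. The remaining points to verify are routine: the $1$-Lipschitzness of $\max(0,\cdot)$, the H\"older/matrix-norm step giving the factor $2\|W\|_{1,\infty}$, and the positivity $B>0$ needed for the division.
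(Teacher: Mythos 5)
Your proof is correct, and while it relies on the same three ingredients as the paper's proof (the $1$-Lipschitzness of $\max(0,\cdot)$, H\"older's inequality with $\|W_j-W_{y_i}\|_1\le 2\|W\|_{1,\infty}$, and the regularization budget $\|W\|_{1,\infty}\le\mu_2-K$), you organize it differently. The paper argues extremally: it selects a single sample $i$ with large clean loss and then a single class index $j$ with per-term difference at most $\frac{K-\mu_1}{C-1}$, carries the poisoned average loss $K$ as the slack parameter, and concludes via $\frac{\mu_1-K}{2(\mu_2-K)(C-1)}\ge\frac{\mu_1}{2\mu_2(C-1)}$ using $K\ge 0$ and $\mu_1>\mu_2$. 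You instead aggregate the per-sample stability bound $L(\F(\boldsymbol{x}_i),y_i)\le L(\F(\boldsymbol{x}_i+\boldsymbol{\epsilon}_i),y_i)+2(C-1)\|W\|_{1,\infty}\|\boldsymbol{\epsilon}_i\|_\infty$ over the whole dataset, carry $B=\|W\|_{1,\infty}$ as the slack (justifying $B>0$ from $\mu_1>\mu_2$), and only substitute $B\le\mu_2$ at the end. Your averaging route has two genuine advantages: first, your intermediate bound $\max_i\|\boldsymbol{\epsilon}_i\|_\infty\ge\frac{\mu_1-\mu_2}{2(C-1)B}+\frac{1}{2(C-1)}$ is slightly stronger than the stated conclusion before the final weakening; second, it sidesteps a small slip in the paper's extraction step, which picks $i$ with clean loss at least $\mu_1$ but then implicitly assumes that same sample has poisoned loss at most $K$, whereas $K$ is only the \emph{average} poisoned loss (the paper's argument is repaired by instead choosing $i$ to maximize the clean-minus-poisoned loss difference, whose average is at least $\mu_1-K$; your global averaging makes this selection unnecessary altogether).
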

\begin{proof}
Let 
$K=\frac{1}{N}\sum\limits_{i=1}^N L(\F(\boldsymbol{x}_i+\boldsymbol{\epsilon}_i),y_i)=\frac{1}{N}\sum\limits_{i=1}^N\sum\limits_{j\neq y_i}\max(0, (W_j-W_{y_i})(\boldsymbol{x}_i+\boldsymbol{\epsilon}_i)+1).$ 

Then it has 
$\|W\|_{1,\infty}\leq \mu_2-K.$
As 
$\frac{1}{N}\sum\limits_{i=1}^N L(\F(\boldsymbol{x}_i),y_i) \geq \mu_1,$ 
there exists $i$ such that
$$\sum\limits_{j\neq y_i}\max(0, (W_j-W_{y_i})\boldsymbol{x}_i+1)\geq \mu_1.$$
Therefore, it has 
$$\sum\limits_{j\neq y_i}\left[ \max(0, (W_j-W_{y_i})(\boldsymbol{x}_i+\boldsymbol{\epsilon}_i)+1)-\max(0, (W_j-W_{y_i})\boldsymbol{x}_i+1) \right] \leq K-\mu_1,$$
then there exists $j$ such that
$$\max(0, (W_j-W_{y_i})(\boldsymbol{x}_i+\boldsymbol{\epsilon}_i)+1)-\max(0, (W_j-W_{y_i})\boldsymbol{x}_i+1)\leq \frac{K-\mu_1}{C-1}.$$
As $K-\mu_1<K-\mu_2\leq \|W\|_{1,\infty}\leq 0$, it holds that
$$|(W_j-W_{y_i})\boldsymbol{\epsilon}_i|\geq \frac{\mu_1-K}{C-1}.$$
Therefore, it has 
$$\|W_j-W_{y_i}\|_1\|\boldsymbol{\epsilon}_i\|_{\infty}\geq\frac{\mu_1-K}{C-1},$$
which conclude that
$$\|\boldsymbol{\epsilon}_i\|_{\infty}\geq\frac{\mu_1-K}{2(\mu_2-K)(C-1)}\geq\frac{\mu_1}{2\mu_2(C-1)}$$
as $\mu_1>\mu_2$.
\end{proof}

\section{Experimental setup and details of learning schedules}
\label{sec-learn-detail}

In Section \ref{section-detection}, when Algorithm \ref{alg-weak} is used, we set the initial learning rate to be 0.01; and when Algorithm \ref{alg-rule} is used, we set the initial learning rate to be 0.1. 
The weight decays and momentums are set as the default setting.
For Algorithm \ref{alg-rule}, we do not use any data augmentations.
In Section \ref{strong-aug} and Appendix \ref{strong-aug-app},  we use learning schedule with 200 total epochs with cosine learning rates annealing.

In Section \ref{def-ats} and Appendix \ref{sec-7.5}, vallina AT means that we use PGD-10 as the adversarial perturbation.
On evaluation of unlearnable examples by adversarial training, for CIFAR-10, we use learning schedule with 110 total epochs and learning rate decays by 0.1 at the 75-th, 90-th and 100-th epoch; 
and for CIFAR-100, we use learning schedule with 200 total epochs and learning rate decays by 0.1 at the 80-th, 140-th and 180-th epoch. 
The initial learning rate is set to be 0.1 for all of three datasets.

In Appendix \ref{sec-7.1}, for CIFAR-10, we use the learning schedule  with 100 total epochs and learning rate decays by 0.1 at the 75-th and 90-th epoch;
and for CIFAR-100 and TinyImageNet, we use the learning schedule with 200 total epochs and learning rate decays by 0.1 at the 60-th, 120-th and 160-th epoch.
Our initial learning rate is set to 0.1, and when evaluating the highest test accuracy the poisoned training set could achieve, we would try smaller learning rates 0.01, 0.001 and 0.0001.

Throughout of our experiments, we use SGD as the optimizer with momentum to be 0.9 and weight decay to be $5\times10^{-4}$, and without special annotations, we use ordinary data augmentations random crop and random horizontal flip for CIFAR-10 and CIFAR-100; 
and we use random crop, random horizontal flip, color jitter 
\cite{shijie2017research}
, and cutout 
\cite{devries2017improved}
for TinyImageNet.
We conduct all of our experiments on a single NVIDIA 3090 GPU.

\section{Poisoned data and noises}
\label{poison-img}

In this section, we give several types of poisoned data and its corresponding perturbations for illustrations.

\begin{minipage}[ht]{0.45\textwidth}
\begin{figure}[H]
\centering
\hspace{2mm}
\includegraphics[width=6.0truecm]{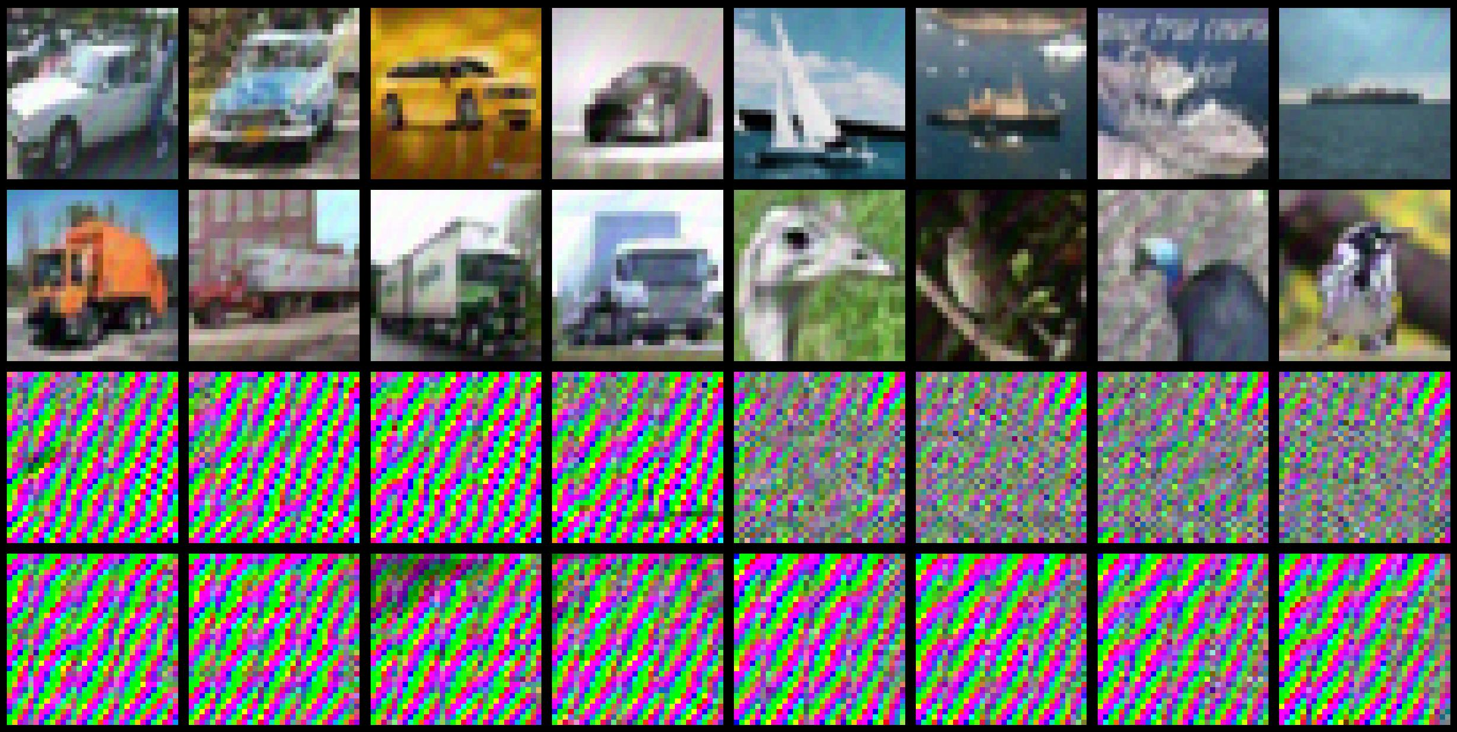}
\caption{Err-min(S) poisoned CIFAR-10 and its corresponding perturbations.}
\end{figure}
\end{minipage}
\quad
\begin{minipage}[ht]{0.45\textwidth}
\begin{figure}[H]
\centering
\hspace{2mm}
\includegraphics[width=6.0truecm]{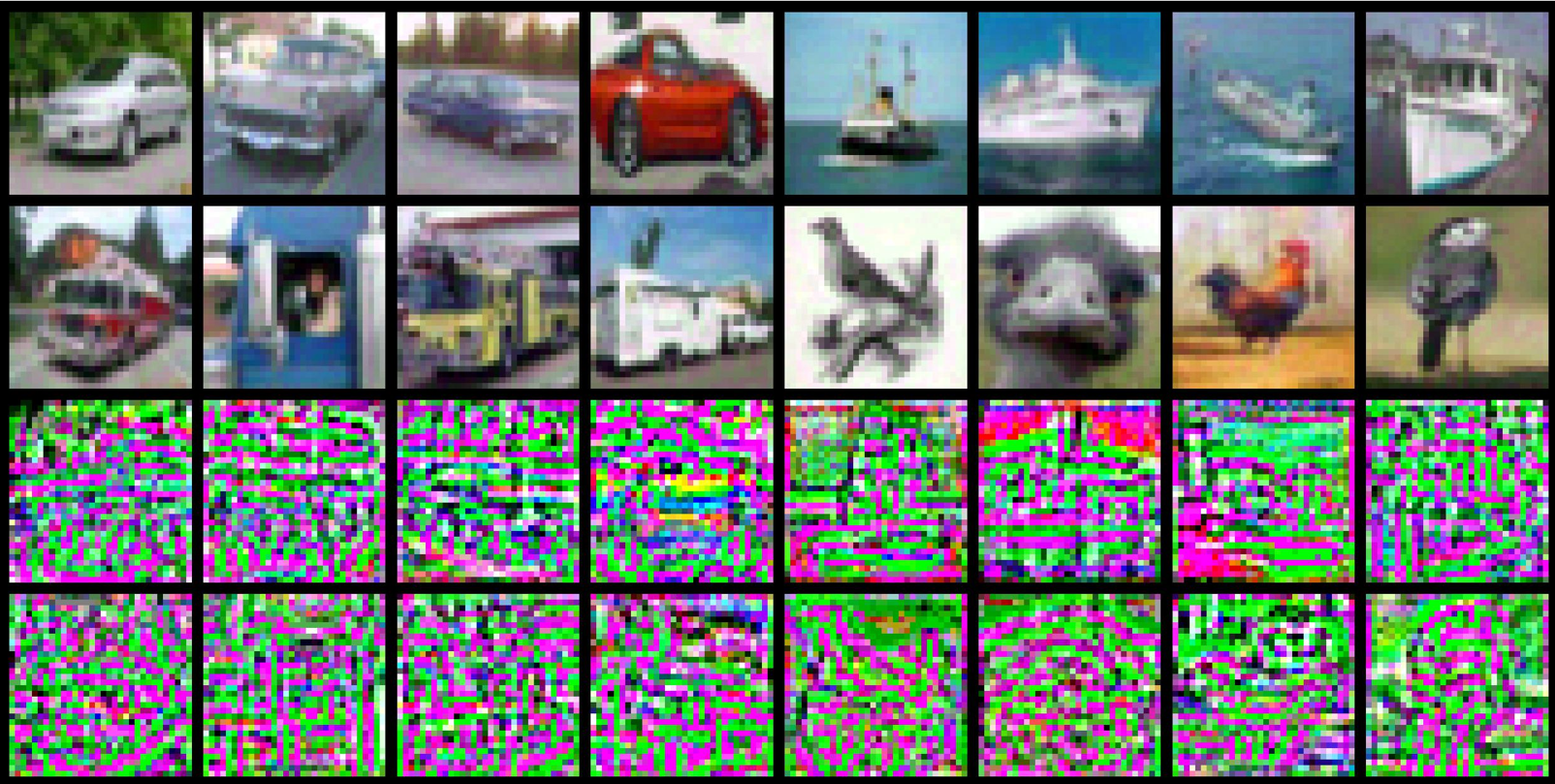}
\caption{RobustEM poisoned CIFAR-10 and its corresponding perturbations.}
\end{figure}
\end{minipage}

\begin{minipage}[ht]{0.45\textwidth}
\begin{figure}[H]
\centering
\hspace{3mm}
\includegraphics[width=6.0truecm]{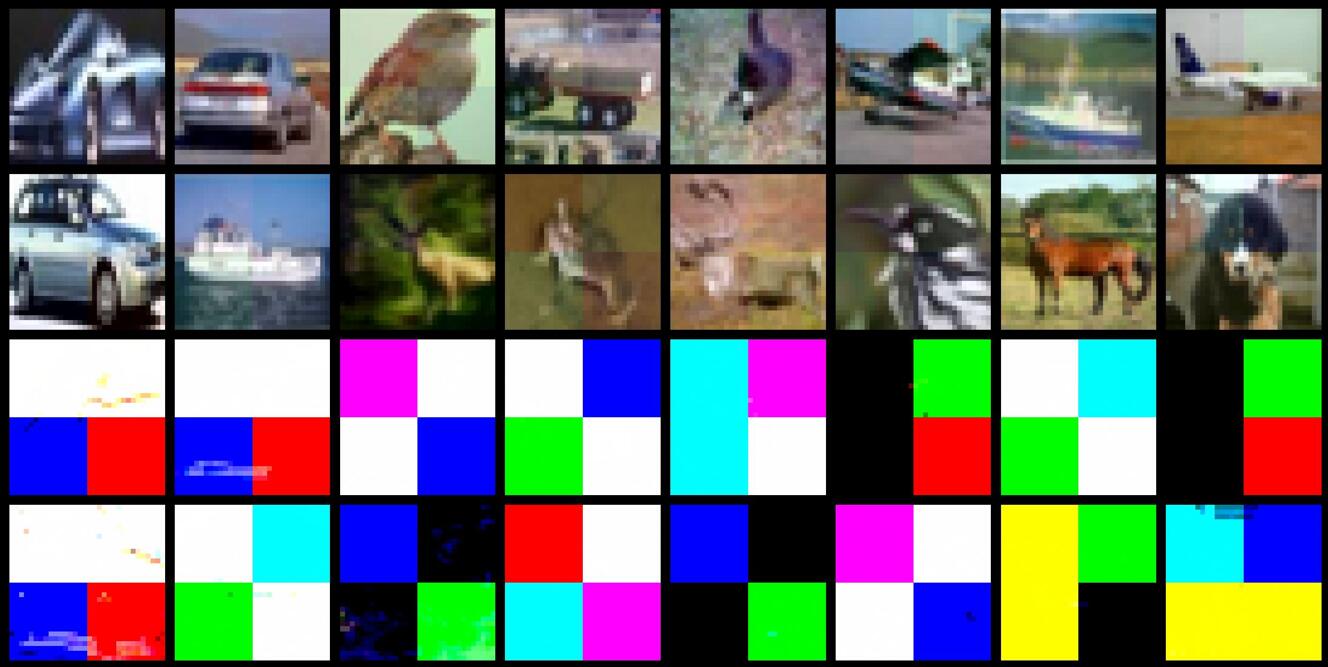}
\caption{Region-4 poisoned CIFAR-10 and its corresponding perturbations.}
\end{figure}
\end{minipage}
\quad
\begin{minipage}[ht]{0.45\textwidth}
\begin{figure}[H]
\centering
\includegraphics[width=6.0truecm]{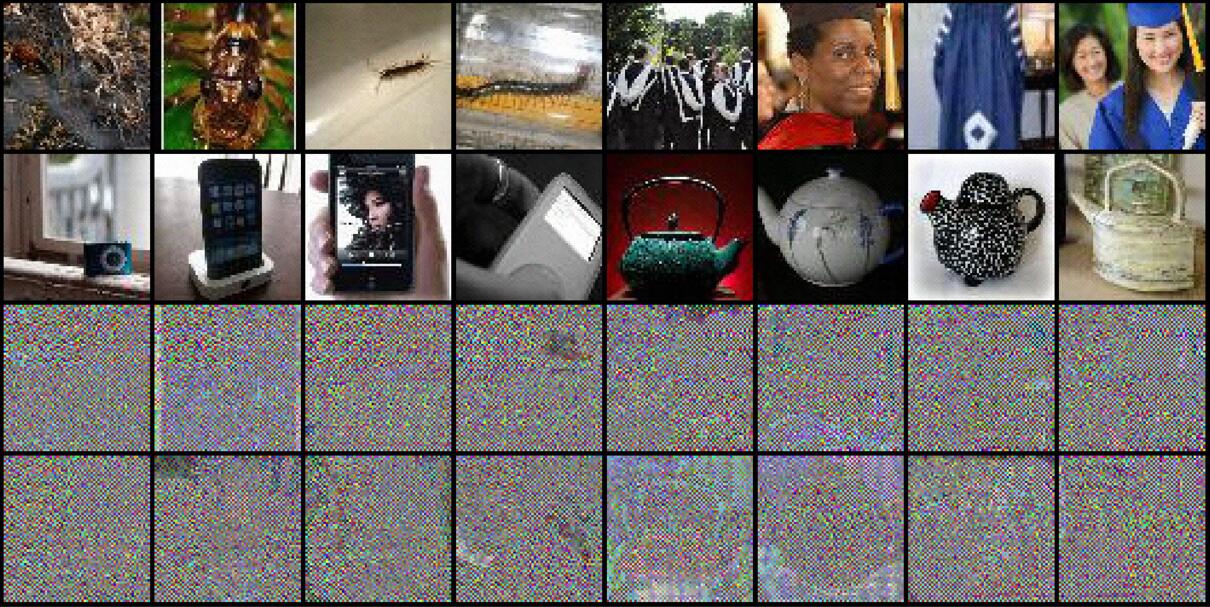}
\caption{Err-min(S) poisoned TinyImageNet and its corresponding perturbations.}
\end{figure}
\end{minipage}

\section{Abnormal styles of unlearnable examples}
\label{sec-7.1}
To better understand how and why unlearnable examples work, we track their training procedure, finding some anomalous performance on peak accuracy and learning curves, which reveals abnormal styles of unlearnable examples, that they learn injected noises rather than original images.

\subsection{Peak accuracy of unlearnable examples}
\label{obs-power}

We evaluate the unlearnable examples given in Section \ref{poison} on several frequently-used networks 
VGG16 
\cite{simonyan2014very}
, ResNet18 
\cite{he2016deep}
, WRN34-10 
\cite{zagoruyko2016wide}, 
DenseNet121 
\cite{huang2017densely} 
and ViT 
\cite{dosovitskiy2020image} 
under different initial learning rates, 0.1, 0.01, 0.001 and 0.0001.
The experimental results are given in Tables \ref{tab-e11} and \ref{tab-e12}.

Based on Tables \ref{tab-e11} and \ref{tab-e12}, we find that for commonly-used initial learning rate 0.1, most poisoned data reach the random guess level, i.e., about $10\%$ for CIFAR-10 dataset.
However, when using smaller learning rates and early-stopping, {\em all of these methods reach much higher peak accuracy}, some methods like Random(C) and AR even reach over $60\%$ peak accuracy, almost making these unlearnable examples back to be learnable.

\begin{table}[!htbp]
\caption{Victim models test accuracy (\%) for poisoned CIFAR-10 under some unlearnable examples.
{\bf Highest} means the highest recorded test accuracy under four types of learning schedules, with initial learning rate be 0.1, 0.01, 0.001 and 0.0001.
{\bf Final} means the test accuracy using initial learning rate be 0.1 and the final result after 100 epochs.
Detailed learning schedules are referred in Appendix \ref{sec-learn-detail}.
}
\vskip5pt
\label{tab-e11}
\setlength{\tabcolsep}{4.5pt}
\centering
\begin{tabular}{lccccccccccc}
\toprule
  \multicolumn{1}{c}{Victim Model} &\multicolumn{2}{c}{VGG16} & \multicolumn{2}{c}{ResNet18}& \multicolumn{2}{c}{WRN34-10} & \multicolumn{2}{c}{DenseNet121} & \multicolumn{2}{c}{ViT}\\
\midrule
  Poison & Final & Highest  & Final & Highest  & Final & Highest  & Final & Highest & Final & Highest\\
  Random(C) &  10.32&69.47&11.02&79.30 & 9.95&81.84&80.63&80.78&55.71&58.10\\
  Region-4 & 10.69&35.48&13.90&32.06 & 19.43&34.09&23.59&34.15&21.79&30.13\\
  Region-16 &  10.84&31.29&19.86&35.81& 10.12&39.64&20.90&44.67&21.43&31.00\\
  Err-min(S) &9.43&45.20 &10.09&40.80& 10.10& 35.84& 10.05&47.81&10.35&56.64\\
  Err-min(C)&10.00&51.24&10.01&49.03& 10.31&38.04&11.61&48.37&26.22&63.42\\
  Err-max&8.21&62.40&7.19&62.03&8.39&50.70&39.77 &68.28&64.15&64.21 \\
  NTGA&10.82&46.75&11.23&45.01& 13.58&44.70&36.23&48.24&30.66&54.11\\
  AR &14.29 & 67.34 & 17.18 & 65.70&  16.51 & 66.23 & 82.51 & 82.65 & 45.97 & 67.30\\
  RobustEM&24.25&45.39&25.30&41.53& 21.56&38.08&27.94&48.55&19.17&50.08\\
\bottomrule
\end{tabular}
\end{table}

\begin{table}[!ht]
\caption{The final and highest test accuracy (\%) for victim models when training in poisoned CIFAR-100 and TinyImageNet under some unlearnable examples.}
\vskip5pt
\label{tab-e12}
\centering
\begin{tabular}{clcccccc}
 \toprule
\multirow{2}{*}{Dataset}& Victim model & \multicolumn{2}{c}{ResNet18} & \multicolumn{2}{c}{WRN34-10} & \multicolumn{2}{c}{ViT}\\
 \cline{2-8}
 
   &Poison & Final & Highest  & Final & Highest  & Final & Highest  \\
\midrule
  \multirow{6}{*}{CIFAR-100}
  &Random(C)& 53.59&55.47&2.42&39.19&36.17&41.05\\
  &Region-4 & 3.95&13.17&2.83&12.60&8.67&12.92\\
  &Region-16 & 1.06&20.37&1.09&16.44&6.74&15.67\\
  &Err-min(S) & 1.00&34.66&1.02&36.18&16.85&37.07\\
  &Err-min(C) &0.75&32.98&1.14&33.05&18.23&37.47\\
  \hline

 \multirow{6}{*}{TinyImageNet}
  &Random(C) & 31.74&35.23&37.99&38.18&14.39&16.99\\
  &Region-4 & 7.54&10.80&
  5.94&15.10&9.48&12.69\\
  &Region-16 & 7.18&12.72&3.76&25.76&6.71&13.42\\
  &Err-min(S) & 1.72&28.18&0.92&24.45&24.13&24.21\\
  &Err-min(C) &1.37&23.01&1.07&18.21&9.14&13.15\\
\bottomrule
\end{tabular}
\end{table}

\subsection{Unusual learning curves of poisoned data}
\label{sec-obs}
\begin{figure}[!htt]
\centering
\hspace{2mm}
\subfigure[lr $=1\times10^{-2}$]{\includegraphics
[scale=0.35]{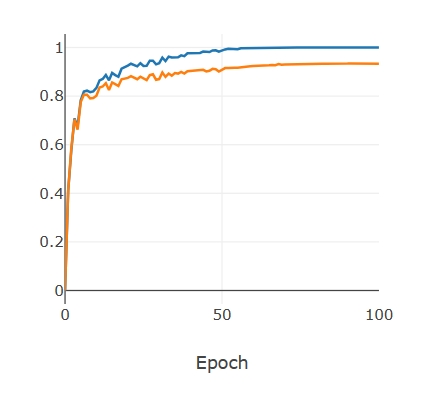}}
\subfigure[lr $=1\times10^{-3}$]{\includegraphics
[scale=0.35]{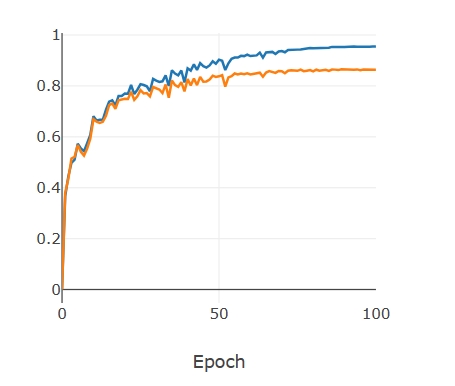}}
\subfigure[lr $=1\times10^{-4}$]{\includegraphics
[scale=0.35]{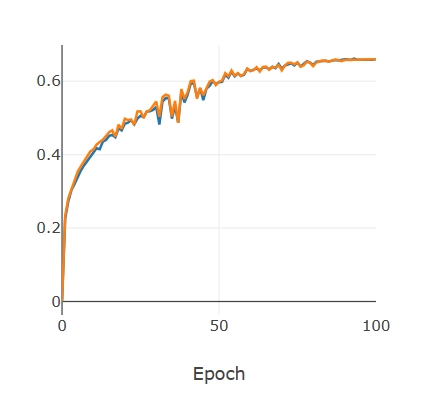}}
\\
\caption{Learning curves of clean CIFAR-10.
The victim model is ResNet18. {\bf lr} means the initial learning rate. Blue curves and orange curves means the training and test accuracy respectively.
}
\label{fg3}
\end{figure}

In Figures \ref{fg3}, \ref{fg-errmins}, \ref{fg-errminc} and \ref{fg-region}, the learning curves on the normal training set and some poisoned training set for different initial learning rates are shown.
From Figures \ref{fg-errmins}, \ref{fg-errminc} and \ref{fg-region}, we can see that the training accuracy of poisoned dataset quickly goes to $100\%$ for learning rates 0.01 and 0.001, but the test accuracy does not rise, or rises slightly and then quickly falls back to a very low level.
On the contrary, for the clean training set, the test accuracy grows along with the training accuracy.

Furthermore, when the initial learning rate is much smaller, like 0.0001, the peak test accuracy when training on poisoned dataset is much higher than that of  0.01. 
However, training on clean dataset with initial learning rate 0.0001 does not converge as shown in Figure  \ref{fg3}(c), 
 which means that this learning rate is not enough for the network to learn the original features.
The training curves of poisoned dataset rise very fast, even for a very small learning rate that clean dataset can not be learned well, they still achieve 100\% training accuracy. 
In conclusion, the victim models indeed learn something useful under poisoned dataset, but quickly they are forgotten.

\begin{figure}[!ht]
\centering
\hspace{2mm}
\subfigure[lr $=1\times10^{-2}$]{
\includegraphics[scale=0.35]{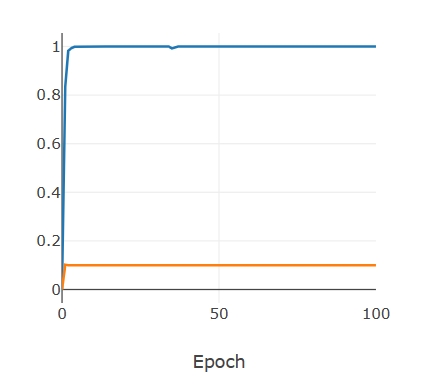}}
\subfigure[lr $=1\times10^{-3}$]{
\includegraphics[scale=0.35]{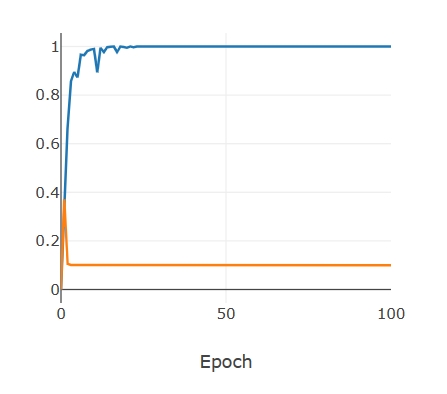}}
\subfigure[lr $=1\times10^{-4}$]{
\includegraphics[scale=0.35]{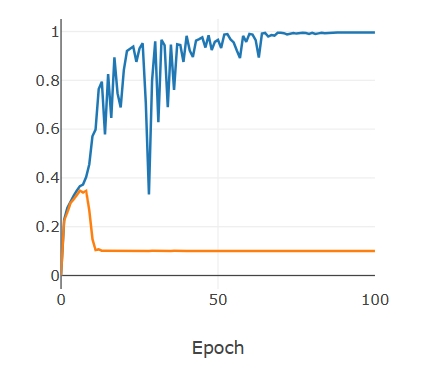}}\\

\caption{Learning curves of Err-min(S) poisoned CIFAR-10.
 Notations are same to Figure \ref{fg3}.
}
\label{fg-errmins}
\end{figure}

\begin{figure}[H]
\centering
\hspace{2mm}
\subfigure[lr $=1\times10^{-2}$]{
\includegraphics[scale=0.35]{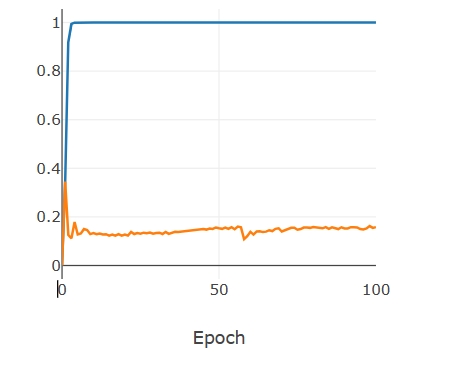}}
\subfigure[lr $=1\times10^{-3}$]{
\includegraphics[scale=0.35]{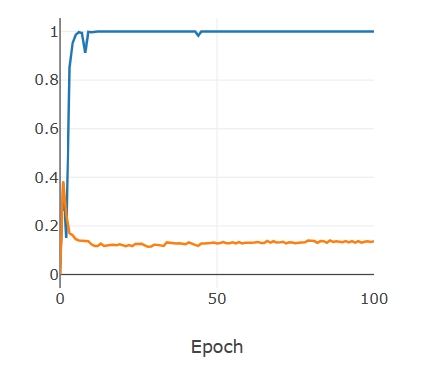}}
\subfigure[lr $=1\times10^{-4}$]{
\includegraphics[scale=0.35]{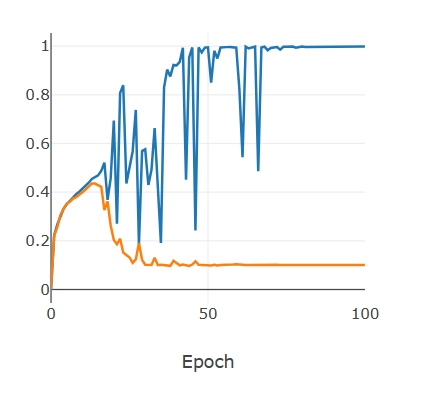}}

\caption{Learning curves of Err-min(C) poisoned CIFAR-10. 
Notations are same to Figure \ref{fg3}.}
\label{fg-errminc}
\end{figure}

\begin{figure}[H]
\centering
\hspace{2mm}
\subfigure[lr $=1\times10^{-2}$]{
\includegraphics[scale=0.35]{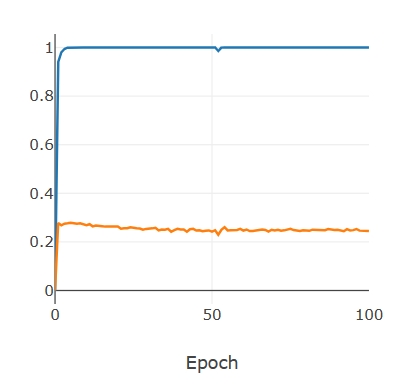}}
\subfigure[lr $=1\times10^{-3}$]{
\includegraphics[scale=0.35]{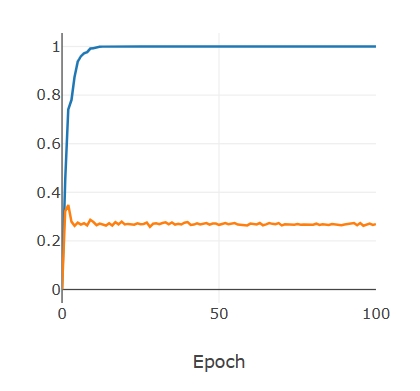}}
\subfigure[lr $=1\times10^{-4}$]{
\includegraphics[scale=0.35]{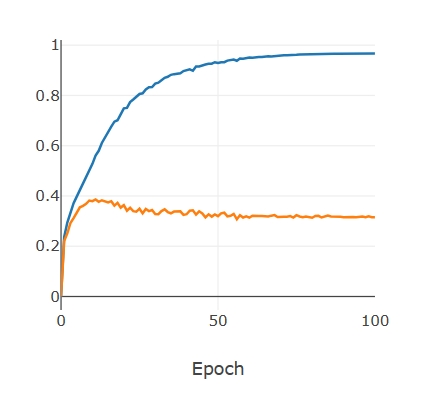}}

\caption{Learning curves of Region-16 poisoned CIFAR-10. 
Notations are same to Figure \ref{fg3}.}
\label{fg-region}
\end{figure}

\subsection{Dominance of noise features on unlearnable examples}
\label{sec-7.2}
In this section, extensive experiments demonstrate that training on unlearnable examples will result in the network to learn the features of poisons rather than the original images.

Let $D_{\ts}=\{(\boldsymbol{x}_i,y_i)\}$ be the test set (or validation set), $D_{\ts}^{\poi}=\{(\boldsymbol{x}_i+\boldsymbol{\epsilon}(\boldsymbol{x}_i),y_i)\}$ be the test set poisoned by unlearnable attacks.
We consider 8 more type of test sets (or validation sets):
\begin{itemize}
\item[(1)] $D_{\ts}^{\poi}(1)=\{(\boldsymbol{\epsilon}(\boldsymbol{x}_i),y_i)\}$;
\item[(2)] $D_{\ts}^{\poi}(2)=\{(\boldsymbol{\epsilon}(\boldsymbol{x}_i)+0.5\boldsymbol{e},y_i)\}$;
\item[(3)] $D_{\ts}^{\poi}(3)=\{(\boldsymbol{\epsilon}(\boldsymbol{x}_i)-0.5\boldsymbol{e},y_i)\}$;
\item[(4)] $D_{\ts}^{\poi}(4)=\{(\boldsymbol{\epsilon}(\boldsymbol{x}_i)+\boldsymbol{x}_j,y_i)\}$, $\boldsymbol{x}_j$ is a randomly selected sample with label different from $\boldsymbol{x}_i$;
\item[(5)] $D_{\ts}^{\poi}(5)=\{(\boldsymbol{\epsilon}(\boldsymbol{x}_i)+\boldsymbol{x}_j,y_j)\}$, $\boldsymbol{x}_j$ is a randomly selected sample with label different from $\boldsymbol{x}_i$;
\item[(6)] $D_{\ts}^{\poi}(6)=\{(\boldsymbol{\epsilon}(\boldsymbol{x}_i)+32/255 \delta(-1,1)\boldsymbol{e},y_i)\}$, $\delta(-1,1)=\pm 1$ with probability 0.5 respectively.
\item[(7)] $D_{\ts}^{\poi}(7)=\{(\boldsymbol{\epsilon}(\boldsymbol{x}_i)+U(-32/255,32/255)\boldsymbol{e},y_i)\}$, $U$ is uniform distribution.
\end{itemize}

We use ResNet18 as the victim model to conduct the noise learnability test.
Results are shown on Table \ref{tab-noise-learn}.
$D_{\ts}^{\poi}(2)$ and $D_{\ts}^{\poi}(3)$ represent the noise learnability under a large bias-shifting noise.
$D_{\ts}^{\poi}(4)$ and $D_{\ts}^{\poi}(5)$ represent adding a poison on a clean data or adding a clean data on a poison, with different labels.
$D_{\ts}^{\poi}(6)$ and $D_{\ts}^{\poi}(7)$ represent adding a large magnitude of the random perturbations.

\begin{table}[H]
\caption{Noise learnability (\%) of some poison methods for CIFAR-10.}
\label{tab-noise-learn}

\setlength{\tabcolsep}{4pt}
\centering
\begin{tabular}{lcccccccccc}
  \toprule
  Poison & $D_{\ts}$ & $D_{\ts}^{\poi}$  &$D_{\ts}^{\poi}(1)$ & $D_{\ts}^{\poi}(2)$  &$D_{\ts}^{\poi}(3)$    & $D_{\ts}^{\poi}(4)$ & $D_{\ts}^{\poi}(5)$& $D_{\ts}^{\poi}(6)$& $D_{\ts}^{\poi}(7)$\\
  Random(C)  & 10.46 & 100.0 & 100.0 & 100.0 & 100.0 & 100.0 & 0.00 & 17.96 & 32.17\\
  Region-4 & 17.21 & 99.97 & 100.0 & 100.0 & 100.0 & 99.87 & 0.01 & 75.58 & 97.74\\
  Region-16 & 15.27 & 100.0 & 99.99 & 100.0 & 100.0 & 99.94 & 0.02 & 78.78 & 98.25\\
  \text{Err-min}(S) & 10.00 & 99.99 & 99.98 & 100.0 & 100.0 & 99.91 & 0.00 & 84.84 & 89.38\\
  \text{Err-min}(C) & 9.60 & 100.0 & 100.0 & 100.0 & 100.0 & 100.0 & 0.00 & 99.20 & 99.98\\
  NTGA & 10.10 & 99.98 & 99.98 & 100.0 & 100.0 & 99.80 & 0.06 & 11.10 & 15.24\\
  AR & 13.63 & 99.98 & 100.0 & 100.0 & 100.0 & 99.53 & 0.18 & 10.14 & 12.65\\
\bottomrule
\end{tabular}
\end{table}

The results demonstrate that all of the unlearnble examples have powerful competence of learning the deliberately injected poisons, even for large bias-shifting noise like $D_{\ts}^{\poi}(2)$ and $D_{\ts}^{\poi}(3)$, and adding images with different labels like $D_{\ts}^{\poi}(4)$.

For $D_{te}^{\poi}(5)$,  the sample $\boldsymbol{\epsilon}(\boldsymbol{x}_i)+\boldsymbol{x}_j$ should correlate to label $y_j$. However, the poison $\boldsymbol{\epsilon}(\boldsymbol{x}_i)$ is so strong to break the relationship
between $\boldsymbol{x}_j$ and $y_j$.
It is worth noting that comparing to $D_{te}^{\poi}(4)$, the network always predicts labels $y_i$ to $\boldsymbol{\epsilon}(\boldsymbol{x}_i)+\boldsymbol{x}_j$ but not $y_j$, which means that it focuses on the noise feature $\boldsymbol{\epsilon}(\boldsymbol{x}_i)$ rather than the original one $\boldsymbol{x}_j$.

Datasets $D_{\ts}^{\poi}(6)$ and $D_{\ts}^{\poi}(7)$ are provided to show that some injected poisons are very strong and robust like Region-$k$ and Err-min(S,C) poisons.
For $D_{\ts}^{\poi}(6)$ and $D_{\ts}^{\poi}(7)$,
the perturbation budget ($32/255$) is four times of the poison injection budget ($8/255$), but these poisons are still learnable as shown by the results in Table \ref{tab-noise-learn}. 
This further reveals that the injected noises have strong correlation with labels, which are very hard to be destroyed.

\section{Time of experiments}
\label{poi-time}

\begin{table}[H]
\caption{Poison generation time for CIFAR-10, with a single NVIDIA 3090 GPU.}
\label{tab-e4}
\centering
\begin{tabular}{llccccc}
\toprule
Poison Attack & Time \\
\midrule
  Random(C) & $\approx 3$ s\\
  Region-4 &  $\approx 30$ s\ \\
  Region-16 &   $\approx 30$ s\\
  Error-min(S) & $\approx 90$ min\\
  Error-min(C) & $\approx 15$ min\\
  NTGA & $\approx 10$ h\\
  RobustEM & $\approx 50$ h\\
\bottomrule
\end{tabular}
\end{table}

From Table \ref{tab-e4}, we can see that
the poison generation time of class-wise poisons
are much smaller than that of sample-wise poisons,
which is rational since the numbers of classes are
much smaller than those of samples.

\begin{table}[H]
\caption{Time of unlearnable examples defense for CIFAR-10 and CIFAR-100, with a single NVIDIA 3090 GPU. \textbf{AN+SDA} means using stronger data augmentations with adversarial noises generated by two-layer NN, \textbf{AT} means vallina adversarial training.}
\label{tab-noise-time}
\centering
\begin{tabular}{lccc}
\toprule
Defense method & AN+SDA & AT \\
\midrule
  CIFAR-10 Time & $\approx 80$ min & $\approx 270$ min\\
  CIFAR-100 Time & $\approx 80$ min & $\approx 480$ min\\
\bottomrule
\end{tabular}
\end{table}

\section{The similarity of sample-wise poisons for the same label}
\label{sec-7.3}

As shown in Table \ref{tab-e11}, even for the Random(C) unlearnable example, the victim models can be surprisingly fooled under usual learning schedules. 
The effective unlearnable examples Region-$k$ and Err-min(C) poisons are also class-wise-based.
These facts reveal the great power for class-wise noise to induce unlearnability, because victim models are deceived to learn the different noises as features for classification rather than the image features themselves.
Therefore, it is reasonable to guess that sample-wise unlearnable examples generate similar noises for samples within the same class, which behave like class-wise unlearnable examples. 
In this section, we will analyze the similarity of sample-wise poisons for samples within the same class using two measurements.

{\bf Cosine similarity of poisons between intra-classes and inter-classes.}
The cosine similarity between two poisons  $\boldsymbol{\epsilon}(\boldsymbol{x}_i)$ and $\boldsymbol{\epsilon}(\boldsymbol{x}_j)$ is $\textup{cossim}(\boldsymbol{\epsilon}(\boldsymbol{x}_i),\boldsymbol{\epsilon}(\boldsymbol{x}_j))=\frac{\left<\boldsymbol{\epsilon}(\boldsymbol{x}_i),\boldsymbol{\epsilon}(\boldsymbol{x}_j)\right>}{\|\boldsymbol{\epsilon}(\boldsymbol{x}_i)\|_2\|\boldsymbol{\epsilon}(\boldsymbol{x}_j)\|_2}$, where $\left<\cdot,\cdot\right>$ is the inner product.
We use the Monte Carlo method to randomly sample 10000 images from two datasets and take the average of them as the approximation of cosine similarity between the two sets. 
Results are shown in Table \ref{cossim}.

 \begin{table}[H]
\caption{Cosine similarity of noise under sample-wise unlearnable examples.
Intra-class means for samples within the same class.
Inter-class means for samples from different classes.
}
\label{cossim}
\centering
\begin{tabular}{lcc}
 \toprule
  Poisons  & Intra-classes & Inter-classes \\
\midrule
  Random noise   & 0.0000 &-0.0003\\
  Err-min(S)   & 0.1232 &0.0024\\
  Err-max & 0.0253 & 0.0036 \\
  NTGA   & 0.7757 &0.2317 \\
  AR   & 0.0012 & -0.0001 \\
\bottomrule
\end{tabular}
\end{table}

{\bf Commutative KL divergence of poisons between intra-class and inter-class.}
We assume that training samples obey the multivariate Gaussian distribution for simplicity, because the KL divergence between two Gaussian distributions has close form solutions.
We estimate the mean and covariance of multivariate Gaussian distributions with existing poisons.
For intra-classes, we randomly spilt poisons in the same class into two parts with the same amounts, then estimate their means and covariances respectively to obtain two multivariate Gaussian distributions and their KL divergences.
Since KL divergence is non-commutative, we use commutative KL divergence defined as $\textup{CKL}(A,B)=\frac{1}{2}(\textup{KL}(A,B)+\textup{KL}(B,A))$. Results are shown in Table \ref{ckl}.

\begin{table}[htbp]
\caption{Commutative KL divergence of poisons under sample-wise unlearnable examples.
Intra-class means for samples within the same class.
Inter-class means for samples from different classes.
}
\label{ckl}
\centering
\begin{tabular}{lcc}
\toprule
Poisoned Data  & Intra-classes & Inter-classes \\
\midrule
  Random noise & 0.0001& 0.0002\\
  Err-min(S)  & 0.0000 & 0.9961\\
  Err-max & 0.0004 & 0.0314 \\
  NTGA  & 0.0000 & 1.7677\\
  AR  & 0.0002 & 0.0353 \\
\bottomrule
\end{tabular}
\end{table}

As shown by Tables \ref{cossim} and \ref{ckl}, although sample-wise poisons have different injected poisons for each sample, the similarity for samples from within the same class are significantly larger than those from different classes, which reveals sample-wise poisons implicitly contain properties of class-wise injected poisons.

\section{More experiments on detection of unlearnable examples}
\label{sec-7.4}

In this section, we give more experimental results for Algorithms \ref{alg-weak} and \ref{alg-rule} to show their power for detection with more unlearnable examples, including some indiscriminate attacks violating the restriction of unlearnable examples on poison power, such as CUDA \cite{sadasivan2023cuda} and OPS \cite{wu2022one}.

\subsection{Algorithm \ref{alg-weak}: Simple Networks Detection}
We use a linear model or a two-layer NN to evaluate the training accuracy on clean dataset and various unlearnably poisoned dataset.
For two-layer NN we set the width of hidden layer be same as the dimension of the input.

\begin{table}[!ht]
\caption{The detection accuracy (\%) for poisoned CIFAR-10.
For Err-min(S, C), we use VGG16 and ResNet18 respectively as the source model to generate poisons.
}
\label{detect1-cifar10}
\centering

\begin{tabular}{lccccccccc}
  \toprule
  \multirow{2}{*}{Poison Methods} & \multicolumn{2}{c} {Simple Networks Detection} & \multicolumn{2}{c} {Bias-shifting Noise Test}\\

  & \multicolumn{1}{c} {Linear model} & \multicolumn{1}{c} {Two-layer NN} & $\epsilon_b=-0.5$ & $\epsilon_b=0.5$\\
\midrule
  Clean CIFAR-10  &\textit{46.53} &\textit{57.33} &\textit{49.08} &\textit{42.12}\\
  Random(C)   &  100.0 &99.13 & 100.0 & 99.84\\
  Region-4  & 97.15& 99.82 & 98.74 & 99.84\\
  Region-16 & 99.87& 99.98 & 99.98 & 99.64\\
  Region-64 & 100.0& 100.0 & 100.0 & 99.64\\
  Err-min(VGG16,S)  &  89.43 &98.83 & 98.56 & 96.26\\
  Err-min(ResNet18,S)  & 99.99&99.77& 97.85 & 100.0\\
  Err-min(VGG16,C) &  100.0&100.0 & 99.33 & 99.90\\
  Err-min(ResNet18,C) &100.0&99.66&100.0&100.0\\
  Err-max & 80.48 & 98.52 & 99.96 &99.20\\
  NTGA  &  97.39&99.13&90.82&95.26\\
  AR & \textit{46.96}& 96.75&90.96&99.98\\
  RobustEM & 78.49 & 99.41 &99.10 &99.40 \\
  Hypocritical & 77.48 & 99.66 &98.10 &94.06\\
  Adv Inducing & \textit{53.37} & 98.94 &86.25 &75.48 \\
  CP & \textit{47.49} & \textit{56.87} & 81.31 & 78.28\\
  TUE & 100.0 & 78.85 & 99.92 & 99.37 \\
  CUDA & 97.89 & 80.40 & \textit{13.62} & \textit{25.08} \\
  OPS & 99.94 & 99.78 & 98.66 & 85.13 \\
\bottomrule
\end{tabular}
\end{table}

\subsection{Algorithm \ref{alg-rule}: Bias-shifting Noise Test}
\label{app-bias-shift}

As mentioned in Section \ref{sec-exp11}, most of the poisoned dataset could be detected by the linear model and bias-shifting noise test with $\boldsymbol{\epsilon}_b=0.5\boldsymbol{e}$, and all of them could be detected by two-layer NN and bias-shifting noise test with $\boldsymbol{\epsilon}_b=-0.5\boldsymbol{e}$.
The experimental results in this section further supported this statement.
The datasets recognized as a clean dataset by detection algorithms if they are marked in italic. 
The detection bound $B$ is 0.7.

\paragraph{Choices of bias-shifting noises.}
We evaluate more choices of bias-shifting noises within $[-0.9, 0.9]$ (-1 and 1 are meaningless as they would result in a complete degradation of the image to 0 and 1), The results are provided in the following table.
 \begin{table*}[!ht]
 \caption{Detection accuracy of different magnitude of bias-shifting noises under Err-min(S) unlearnable examples compared with the clean CIFAR-10. 
}
\label{bias-noise-choice}
 \small
\centering
\setlength{\tabcolsep}{2.2pt}
\begin{tabular}{lccccccccccccccccccc}
 \toprule
Noise & -0.9 & -0.8& -0.7 & -0.6 &-0.5 & -0.4 & -0.3 & -0.2 & -0.1 & 0.0 & 0.1 & 0.2 & 0.3 & 0.4 & 0.5 & 0.6 & 0.7 & 0.8 & 0.9\\
  \midrule
  Clean & \textit{16.97} & \textit{25.58} & \textit{26.49} & \textit{41.53} & \textit{49.08} & \textit{60.73} & \textit{67.86} & 80.84 & 77.57 & 78.72 & 78.22 & 75.69 & 71.70 & \textit{60.45} & \textit{42.12} & \textit{34.43} & \textit{25.97} & \textit{25.11} & \textit{15.40} \\
  Poisoned & 95.65 & 92.65 & 96.83 & 97.95 & 97.85 & 100.0 & 100.0 & 100.0 & 100.0 & 100.0 & 100.0 & 100.0  & 100.0 & 100.0 & 100.0  & 99.98 & 97.76 & 96.78 & 88.30  \\

  \bottomrule
\end{tabular}
\end{table*}

\section{More experiments on stronger data augmentations defense}
\label{strong-aug-app}

\paragraph{Defense power under different strengths of data augmentations.}
We also evaluate defense power of stronger data augmentations of different strengths without using predetermined adversarial-trained noises on Table \ref{tab-aug-def-nodenoise}. 
Results in Table \ref{tab-aug-def-nodenoise} also show that only pure data augmentation method can also achieve comparable defense power to adversarial training.

Follow the setting of \cite{luo2023rethinking}, we quantify the power of data augmentation with strength $s\in[0, 1]$.
When $s=0$, the augmentation degrades to the data augmentations with random resized crop and random horizontal flip.
When $s=1$, the augmentation equals to the common contrastive learning setting with random resized crop, random horizontal flip, color jitter and random gray-scale.

\begin{table*}[!ht]
\caption{Test accuracy (\%) under different data augmentation strengths for poisoned CIFAR-10 without using adversarial-trained noises.}
\label{tab-aug-def-nodenoise}
\centering
\begin{tabular}{cccccc}
 \toprule
Strength & 1.0 & 0.8 & 0.6& 0.4 & 0.2\\
  \midrule
  Region-16 & \textbf{77.20}  & 74.30  & 69.68 & 59.53 & 46.95\\
  Err-min(S) &  \textbf{75.62} & 73.22  & 72.63 & 66.77 & 44.69\\
  RobustEM &  79.21 & \textbf{80.55}  & 78.44 & 72.19 & 63.72\\
  \bottomrule
\end{tabular}
\end{table*}

\paragraph{Defense power on CIFAR-100.}
We evaluate our defense Algorithm \ref{alg-strong-aug} on three typical kinds of unlearnable examples on CIFAR-100.
Compared to adversarial training, results show that our defense method outperforms all of the three kinds of unlearnable examples, and even only use stronger data augmentations, the test accuracy are comparable to adversarial training. 
The experimental results further support our claim in Section \ref{strong-aug}, that destroying the detectability rather than achieving robustness can help people defend against unlearnable examples.

\begin{table*}[!ht]
\caption{Test accuracy (\%) of stronger data augmentations with adversarial noises for poisoned CIFAR-100.}
\label{tab-aug-def-cifar100}
\centering
\begin{tabular}{cccccc}
 \toprule
Method/Poison & Region-16 & Err-min(S) & RobustEM\\
  \midrule
  AT-based methods & 53.08 & 52.23 & 49.85 \\
  Adversarial noises (AN) &  24.83 & 5.26  & 7.12 \\
  Stronger data augmentation (SDA) &  44.60 & 49.82  & \textbf{50.61} \\
  AN+SDA &  \textbf{72.86} & \textbf{71.56}  & 49.40 \\
  \bottomrule
\end{tabular}
\end{table*}

\paragraph{Suboptimality on Err-max poison.}
In order to gain insights into the reasons behind the suboptimal performance of our defense method when applied to Err-max, we have undertaken several simple ablation studies on them.
In addition to examining the effects of adversarial noise (AN), we have also explored the utilization of anti-adversarial noise (Anti-AN) applied to a two-layer neural network. 
This involves identifying the loss function's minimum point through a PGD attack. Furthermore, we have taken into account general data augmentation (DA), encompassing random cropping and random horizontal flipping. The results are shown in Table \ref{diff-aug-errmax}.

\begin{table*}[!ht]
\caption{Defense performance of different data augmentations and adversarial noises.}
\label{diff-aug-errmax}
\centering
\begin{tabular}{lccccccc}
 \toprule
Defense Method & DA & AN+DA & AntiAN+DA & SDA & AN+SDA & AntiAN+SDA\\
  \midrule
  Performance & 14.23 & 28.74 & 63.99 & 56.96 & 61.19 & 74.36 \\

  \bottomrule
\end{tabular}
\end{table*}

The obtained results underscore a noteworthy observation – the efficacy of adversarial noise (AN) as a defense mechanism for Err-max poisons is rather limited.
AN displays only a marginal improvement over employing no denoising procedure at all and falls short when compared to the application of anti-adversarial noise (AntiAN) as a denoising technique. 
This outcome could potentially be attributed to the nature of Err-max poisons, which employ adversarial noises in their unlearnable attacks. 
Consequently, attempting similar adversarial noise application on simple networks might yield suboptimal results in terms of denoising the perturbation. 
While AntiAN endeavors to mitigate Err-max perturbations, it remains incapable of eliminating their detectability as shown in Table \ref{detection-errmax}, resulting in a performance lag of approximately 10\% compared to the state-of-the-art defense method.

\begin{table*}[!ht]
\caption{Detection performance on Err-max poisons with different adversarial noises.}
\label{detection-errmax}
\centering
\begin{tabular}{lcccc}
 \toprule
Detection Method & 2-NN & Bias-0.5\\
  \midrule
  Err-max & 80.48 & 98.52 & \\
  Err-max+AN & \textit{32.53} & \textit{37.81} & \\
  Err-max+AntiAN & 90.40 & 97.33 & \\

  \bottomrule
\end{tabular}
\end{table*}
Therefore, as shown in Table \ref{tab-aug-def1}, AT-based methods are still the most effective defense methods for Err-max poisons, we may leave the more powerful defense on Err-max as the future works.

\section{More discussions and experiments on defense by adversarial training}
\label{sec-7.5}
\subsection{More theorems and discussions on criteria between poison budget and defense budget}
\label{def-th-app}
As shown in Section \ref{sec-7.2}, unlearnable examples tend to learn the injected poisons rather than original features of image. 
As noises are imperceptible and bounded by a small budget, it is reasonable for people to believe that adversarial training will bring effective defense because they force the model to perform well on all of the perturbed training data within a small perturbation budget. 
Theoretically, the following theorem indicates that adversarial training is minimizing an upper bound of population loss of the poisoned dataset under  Wasserstein distance.

\begin{theorem}[\cite{tao2021better}]
\label{th-o20}
Let $\D$ be a data distribution, $\D^{\poi}$ be the poisoned dataset of $\D$ under $l_{\infty}$ norm Wasserstein distance, that is, $\D^{\poi}\in\B_{W_{\infty}}(\D,\epsilon)$.
Then for a classifier $\F$, it holds that $\mathcal{R}_{\textup{natural}}(\F,\D)\leq\mathcal{R}_{\textup{robust}}(\F,\D^{\poi}),$ where $\mathcal{R}_{\textup{natural}}$ and $\mathcal{R}_{\textup{robust}}$ denote the natural risk and robust risk (with robust radius $\epsilon$), respectively.
\end{theorem}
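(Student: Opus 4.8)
The plan is to prove the inequality by exhibiting a coupling between the clean and poisoned distributions and then dominating the natural loss pointwise by the robust loss along that coupling. First I would unpack the hypothesis $\D^{\poi}\in\B_{W_{\infty}}(\D,\epsilon)$. By the definition of the $l_\infty$ Wasserstein distance, $W_\infty(\D,\D^{\poi})\le\epsilon$ means there is a coupling $\pi$ of the two distributions, i.e.\ a joint law with first marginal $\D$ and second marginal $\D^{\poi}$, whose transport plan moves each clean feature vector to a poisoned one differing by at most $\epsilon$ in the $l_\infty$ norm. Since poisoning perturbs features but not labels, this coupling keeps the label coordinate fixed, so that for $\pi$-almost every coupled pair $((\boldsymbol{x},y),(\boldsymbol{x}',y))$ one has $\|\boldsymbol{x}-\boldsymbol{x}'\|_\infty\le\epsilon$.

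Second, writing $\boldsymbol{\delta}:=\boldsymbol{x}-\boldsymbol{x}'$, the almost-sure bound $\|\boldsymbol{\delta}\|_\infty\le\epsilon$ exhibits the clean point $\boldsymbol{x}=\boldsymbol{x}'+\boldsymbol{\delta}$ as an admissible adversarial perturbation of the poisoned point $\boldsymbol{x}'$ within the robust radius $\epsilon$. Hence, for the loss $\ell$ underlying the two risks, the clean loss is dominated by the worst-case loss at the poisoned point, and this holds for \emph{any} loss since $\boldsymbol{\delta}$ is a feasible perturbation:
\begin{equation*}
\ell(\F(\boldsymbol{x}),y)=\ell(\F(\boldsymbol{x}'+\boldsymbol{\delta}),y)\le\max_{\|\boldsymbol{\delta}'\|_\infty\le\epsilon}\ell(\F(\boldsymbol{x}'+\boldsymbol{\delta}'),y).
\end{equation*}

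Third, I would integrate this pointwise inequality against $\pi$. Because the marginals of $\pi$ are $\D$ and $\D^{\poi}$, the $\pi$-expectation of the left-hand side equals $\mathcal{R}_{\textup{natural}}(\F,\D)$ and that of the right-hand side equals $\mathcal{R}_{\textup{robust}}(\F,\D^{\poi})$, so monotonicity of expectation gives
\begin{equation*}
\mathcal{R}_{\textup{natural}}(\F,\D)=\mathbb{E}_{\pi}\,\ell(\F(\boldsymbol{x}),y)\le\mathbb{E}_{\pi}\max_{\|\boldsymbol{\delta}'\|_\infty\le\epsilon}\ell(\F(\boldsymbol{x}'+\boldsymbol{\delta}'),y)=\mathcal{R}_{\textup{robust}}(\F,\D^{\poi}),
\end{equation*}
which is the claimed bound.

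The main obstacle is the first step: one must justify that membership in the $\infty$-Wasserstein ball yields a coupling whose pointwise distance bound holds almost surely rather than merely in expectation. In a Polish space such as $\R^d$ the infimum defining $W_\infty$ is attained, so an optimal coupling with $\|\boldsymbol{x}-\boldsymbol{x}'\|_\infty\le\epsilon$ $\pi$-a.s.\ exists and the argument above applies verbatim. If one prefers not to invoke attainment, for each $\gamma>0$ I would take a coupling of cost $\le\epsilon+\gamma$, obtain $\mathcal{R}_{\textup{natural}}(\F,\D)\le\mathcal{R}_{\textup{robust}}^{\,\epsilon+\gamma}(\F,\D^{\poi})$, and send $\gamma\to0$ using right-continuity of the robust risk in its radius (valid when $\ell\circ\F$ is continuous and the perturbation ball is compact). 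This almost-sure control is exactly what separates $W_\infty$ from $W_1$ and is what legitimizes replacing the transported perturbation $\boldsymbol{\delta}$ by the worst case; once it is secured, the remainder is a direct pointwise domination followed by integration over the coupling.
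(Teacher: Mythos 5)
Your proof is correct: the coupling argument --- extracting a $\pi$-a.s.\ bound $\|\boldsymbol{x}-\boldsymbol{x}'\|_\infty\le\epsilon$ from membership in the $W_\infty$ ball, dominating the clean loss pointwise by the worst-case loss at the poisoned point, and integrating against the coupling --- is essentially the original proof of this result in the cited source \cite{tao2021better}, which this paper imports without reproving. Your attention to the attainment issue for the $W_\infty$ infimum (or the $\epsilon+\gamma$ limiting argument in its place) is a sound way to close the only genuinely delicate step.
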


\paragraph{Similarity between victim model and linear model.}
To further show the high linearity of the victim model under unlearnable examples, we conducted a simple experiment presented in Remark \ref{rema-linear},
which indicates that if $\B_p(S,\epsilon)$ is linearly separable, adversarial trained networks will perform similarly to a linear model.
Therefore, since normal datasets are not linear separable according to Remark \ref{rem-58}, the victim model will achieve poor test accuracy even under adversarial training regimes.

\begin{remark}
\label{rema-linear}
We evaluate the linearity of the network trained by Err-min(C) poisoned and clean CIFAR-10, using mean square loss to evaluate the logit of mixup data $\F(\lambda \boldsymbol{x}_i +(1-\lambda) \boldsymbol{x}_j)$ and the mixup logit $\lambda\F(\boldsymbol{x}_i)+(1-\lambda)\F(\boldsymbol{x}_j)$.
Results of the victim model and standard model are 0.0002 and 0.0189, respectively, which shows much higher linearity of the model trained by the poisoned dataset than the clean one.
\end{remark}

\subsection{More experimental observations on poison budget and defense budget}
\label{poison-and-def}
In Table \ref{tab-adv21}, we give the test accuracy
under adversarial training regimes with different budgets trained on poisoned dataset to further support Section \ref{sec-experiment2}.

\begin{table}[ht]
\caption{Test accuracy (\%) for using adversarial training budgets $i/255,i=0,\ldots,16$ to defend unlearnable examples with  poison budget $\epsilon=8/255$. We do not use any data augmentation here for better verification of our theorems. }
\label{tab-adv21}
\centering
\setlength{\tabcolsep}{4.5pt}

\begin{tabular}{cclccccccccc}
 \toprule
&Attack & Poison & 0 & 1/255 & 2/255 &3/255& 4/255 & 6/255 & 8/255 & 12/255 & 16/255\\
\midrule
  \multirow{6}{*}{\begin{turn}{90}CIFAR-10\end{turn}}& \multirow{3}{*}{PGD-10}
  & Region-16&19.86 & 24.32 & 29.57 &50.09& 72.13&\textbf{77.03}&72.65 &67.61&62.78\\
  &&Err-min(S)&10.09 &10.01&10.13& 18.64&69.92&\textbf{76.53} &72.13&67.23&61.70\\
  &&RobustEM&25.30 &24.92&28.50&33.74&46.01&\textbf{76.69} &72.19&63.16&53.34\\
  &\multirow{3}{*}{FGSM} & Region-16&19.86 & 18.97 & 23.09 &31.77& 43.80&58.67&\textbf{75.80} &72.94&64.13\\
  &&Err-min(S)&10.09 &10.00&10.00&10.01&10.57&23.70 &70.78&\textbf{71.04}&63.29\\
  &&RobustEM&25.30 &27.52&26.37&26.95&32.14&47.16 &65.07&\textbf{72.83}&61.18\\
\midrule
  \multirow{6}{*}{\begin{turn}{90}CIFAR-100\end{turn}} & \multirow{3}{*}{PGD-10}
  &Region-16& 1.06 & 2.16 & 9.62 &23.79& 42.35& \textbf{43.57} &38.74&33.88&30.37\\
  &&Err-min(S)& 1.00 & 2.98 & 6.61 & 17.87&\textbf{47.10}&44.23 &40.30&34.38&28.86\\
 &&RobustEM & 7.74 &9.37&10.37& 12.58&19.95&\textbf{42.31} &35.42&24.70&18.21\\
  &\multirow{3}{*}{FGSM} & Region-16&1.00 &1.36 & 5.78&2.88&25.35&\textbf{43.63}&40.71 &25.95&37.32\\
  &&Err-min(S)&1.00 &3.50&7.72 & 18.05 &\textbf{46.49}&44.37 &41.44&35.64&25.06\\
  &&RobustEM&7.14 &9.08&10.21&10.98& 15.32&31.02 &\textbf{38.62}&29.32&25.57\\
\bottomrule
\end{tabular}
\end{table}

Table \ref{tab-adv21} demonstrates that unlearnable poisons can be defended by adversarial training with FGSM as inner-maximization method, achieving comparable test accuracy with PGD-10.
This implies that victim models do not require stronger adversarial attacks to defend unlearnable examples. 
All we need is to destroy the injected linear features, and a smaller number of steps of adversarial perturbation can achieve this goal.

\subsection{Defense under different adversarial training methods}
\label{def-power}
Adversarial training is designed to defend adversarial examples \cite{goodfellow2014explaining, miao2022isometric} originally, but we find that AT can defend unlearnable examples as well.
Various adversarial defense methods could also defend unlearnable examples, as indicated by Theorem \ref{th-o20}.
Experiments on various adversarial defense methods are shown in Table \ref{tab-adv-def}.
From  Table \ref{tab-adv-def}, all of the defenses achieve quite high test accuracy on poisoned dataset.
It is worth noting that Adversarial Weight Perturbation (AWP) \cite{wu2020adversarial} achieves the best defense power.

\begin{table}[ht]
\caption{Test accuracy (\%) for several adversarial defense methods for poisoned CIFAR-10 under adversarial defense budget $\epsilon=8/255$.}
\label{tab-adv-def}
\centering
\begin{tabular}{cccccc}
\toprule
  Adversarial Defense & RobustEM & Region-16 & Err-min(S)& NTGA & AR\\
  \midrule
Vallina AT \cite{madry2018towards} &  80.73   &  83.25 &  \textbf{84.00} & 83.76 & 84.11\\
  FGSM-AT \cite{wong2019fast} &  78.92   &  81.25 &  80.36 & 81.38 & 78.58\\
  TRADES \cite{zhang2019theoretically} & 79.25 & 83.10 &  82.89& 79.32 & 82.27\\
  MART \cite{wang2019improving} & 80.95 &  81.21 &  80.56& 82.08 & 81.72\\
  AWP \cite{wu2020adversarial} & \textbf{81.06} &  \textbf{85.78} &  83.12 &\textbf{84.19} & \textbf{85.25}\\
  \bottomrule
\end{tabular}
\end{table}

We give some intuitive explanations below.
When training a network on poisoned data $D^{\poi}$, if the random seed, initialization and training process are determined, the parameters of network after training by algorithm $\A$ is determined as well, denoted as $\A(D^{\poi})$.
Adversarial training on $D^{\poi}$ could make the network perform well on a neighbourhood of each sample in $D^{\poi}$, which includes many samples in clean dataset $D$, if the adversarial budget is not less than poison budget.
Therefore, a good defense algorithm $\A$ will lead to a small population loss $\mathbb{E}_{(\boldsymbol{x},y)\sim \D} L(\F_{\A(D^{\poi})}(\boldsymbol{x}),y)$.

On the other hand, for an effective defense method, the poison method should not be restricted and  thus $D^{\poi}$ may change.
Therefore, the network parameters $\A(D^{\poi})$ may also change when different poisoned dataset $D^{\poi}$ are used, giving rise to the inspiration that a small population loss should be achieved under the weight perturbations regime \cite{liu2017faultinjection, yu2023adversarial}.
We believe that if a network is more robust on weight perturbations, the defense power for unlearnable data poisoning attacks is stronger.
More effective defense methods under weight perturbation regimes and potential theoretical analyses will be left as future works.

\section{Boarder impact statement}
\label{boarder}
In this paper, we propose effective methods for detecting and defending against unlearnable examples, which are designed to preserve privacy.
Although unlearnable examples have not been widely deployed in real-world applications, our work may cause potential users to reconsider their use as a privacy preservation device.
As a privacy protection method, unlearnable examples must undergo extensive detection and defense verification before being deployed in applications.


\end{document}